\newcommand{\F}{{\mathcal F}}
\newcommand{\R}{{\mathbb R}}
\newcommand{\E}{{\mathbb{E}}}
\newtheorem{assumption}{\bf Assumption}
\newtheorem{lemma}{\bf Lemma}
\newtheorem{proposition}{\bf Proposition}
\newtheorem{theorem}{\bf Theorem}
\newtheorem{corollary}{\bf Corollary}
\newtheorem{remark}{\bf Remark}
\newcommand{\blue}[1]{{\color{black}#1}}
\newcommand{\newblue}[1]{{\color{black}#1}}
\newcommand{\bluest}[1]{{\color{black}#1}}
\title{On the Sample Complexity of Actor-Critic Method \\for Reinforcement Learning with Function Approximation }
\author{Harshat Kumar\footnote{Department of Electrical and Systems Engineering, University of Pennsylvania, Philadelphia, PA 19104}, Alec Koppel\footnote{JPMorgan AI Research, New York, NY}, and Alejandro Ribeiro\footnotemark[1] }
\begin{document}
\maketitle

\begin{abstract}
Reinforcement learning, mathematically described by Markov Decision Problems, may be approached either through dynamic programming or policy search. Actor-critic algorithms combine the merits of both approaches by alternating between steps to estimate the value function and policy gradient updates.  Due to the fact that the updates exhibit correlated noise and biased gradient updates, only the asymptotic behavior of actor-critic is known by connecting its behavior to dynamical systems. This work puts forth a new variant of actor-critic that employs Monte Carlo rollouts during the policy search updates, which results in controllable bias that depends on the number of critic evaluations. As a result, we are able to provide for the first time the convergence rate of actor-critic algorithms when the policy search step employs policy gradient, agnostic to the choice of policy evaluation technique. In particular, we establish conditions under which the sample complexity is comparable to stochastic gradient method for non-convex problems or slower as a result of the critic estimation error, which is the main complexity bottleneck. These results hold in continuous state and action spaces with linear function approximation for the value function. We then specialize these conceptual results to the case where the critic is estimated by Temporal Difference, Gradient Temporal Difference, and Accelerated Gradient Temporal Difference. These learning rates are then corroborated on a navigation problem involving an obstacle and the pendulum problem which provide insight into the interplay between optimization and generalization in reinforcement learning.
\end{abstract}

\section{Introduction} \label{sec:intro}
Actor-critic refers to a family of two time-scale algorithms for reinforcement learning where one alternates between policy gradient updates (actor) and action-value function estimation in an online fashion (critic). These approaches form the bedrock of several practical advances in reinforcement learning, as in supply chain management \citep{giannoccaro2002inventory}, power systems \citep{jiang2014comparison}, robotic manipulation \citep{kober2012reinforcement}, and games of various kinds \citep{tesauro1995temporal,brockman2016openai,mnih2016asynchronous,silver2017mastering}. While their asymptotic stability has been known for decades \citep{konda1999actor,konda2000actor}, their sample complexity is relatively unexplored. In this work, we establish the statistical behavior of actor-critic algorithms for a number of canonical settings, which to our knowledge is the first time a comprehensive accounting has been conducted. 

We focus on reinforcement learning problems over possibly continuous state and action spaces, which are defined by a Markov Decision Process \citep{puterman2014markov}: each time, starting from one state, an agent selects an action, and then it transitions to a new state according to a distribution Markov in the current state and action. Then, the environment reveals a reward informing the quality of that decision. The goal of the agent is to select an action sequence which yields the largest expected accumulation of rewards, defined as the value \citep{bellman1954theory,bertsekas2005dynamic}. Actor-critic algorithms adapt the merits of reinforcement learning algorithms based on approximate dynamic programming with those based on policy search, the two dominant model-free approaches in the literature \citep{sutton2017reinforcement}. 

\blue{For finite spaces, one may obtain the globally optimal policy, and therefore it is possible quantify sample complexity in terms of the gap to the optimal value function (regret) as, e.g., a polynomial function of the cardinality of the state and action spaces -- see \cite{NIPS2018_7735} and references therein. This is possible because these quantities have finite cardinality; however, in continuous spaces, these analyses break down because policy parameterization is required, and the value function becomes non-convex with respect to the policy parameters (unless it is parameterized by a sufficiently high-dimensional neural model \citep{wang2019neural}). }

More specifically, in the \emph{actor step} of actor-critic, stochastic gradient steps with respect to the value function over a parameterized family of policies are conducted. Via the Policy Gradient Theorem \citep{sutton2000policy}, the gradient with respect to policy parameters (policy gradient) is the product of two factors: the score function and the $Q$ function. One may employ Monte Carlo rollouts to estimate $Q$-factors, which under careful choice of rollout horizon, can be shown to be unbiased \citep{santi2018stochastic}. As a result, linking policy gradient methods to more standard stochastic programming results for non-convex optimization, namely, sublinear $\mathcal{O}(k^{-1/2})$ rates to stationarity have recently been established \citep{Zhang_CDC}. Doing so, however, requires an inordinate amount of querying to the environment in order to generate trajectory data. In actor-critic, we replace Monte Carlo rollouts with online estimates for the action-value function.

More specifically, in actor-critic, the \emph{critic step} estimates the action-value ($Q$) function through stochastic approximation, i.e., temporal difference (TD) \citep{sutton1988learning}, approaches to solving Bellman's evaluation equation \citep{watkins1992q,tsitsiklis1994asynchronous}. Combining temporal difference iterations with nonlinear function parameterizations may cause instability, as shown by \cite{baird1995residual,tsitsiklis1997analysis}. This motivates the majority of TD algorithms to focus on the case where the $Q$ function is parameterized by a linear basis expansion over given universal features, which is common in practice \citep{sutton2017reinforcement}, and can be satisfied by radial basis function (RBF) networks or auto-encoders \cite{park1991universal}. We consider this setting of universal features given \emph{a priori}.

The asymptotic stability of linear TD algorithms hinges upon dynamical systems tools to encapsulate the mean estimation error sequence -- see \cite{borkar2000ode,kushner2003stochastic}. By contrast, a number of finite-time characterizations of various TD algorithms have appeared recently, i.e., those based on stochastic fixed point iterations and gradient-based approximations known as gradient temporal difference (GTD) \citep{sutton2009fast}. For TD algorithms, finite-time sublinear rates have been derived both in the case where samples (state-action-reward triples) are independent and identically distributed (i.i.d.) \citep{dalal2018finite,bhandari2018finite,lakshminarayanan2018linear} and when they exhibit Markovian dependence \citep{srikant2019finite}. Further, the convergence of GTD was established in \citep{koppel2017breaking,tolstaya2018nonparametric} by employing coupled supermartingales \citep{wang2017stochastic}, which permits us to derive the rates of \blue{convergence in expectation of} GTD as corollaries. As a result, we may explicitly derive the bias due to critic estimation error in terms of the number of critic steps. This is in contrast to the use of an unbiased estimate from a Monte Carlo rollout, as in pure policy gradient methods. \blue{We further note that contemporaneously of beginning this work, several analyses of GTD have been developed \citep{liu2015finite,dalal2018finite,DalalST20} that refine the rates employed in this analysis; however, these results focus on concentration bounds (``lock-in probability"), a weaker metric of stability than convergence in mean, i.e., convergence in Lebesgue integral implies convergence in measure. Since in this work we focus on the intuitive and broadly interpretable \emph{global convergence} to stationarity in terms of the expected gradient norm of the value function, we seek to employ policy evaluation rates that are compatible with this goal, and defer refined lock-in probability results, for which tighter bounds of convergence on the critic exist, to future work.}
\vspace{3mm}

{\bf \noindent Convergence of Actor-Critic}  In this work, we link the behavior of actor-critic to gradient ascent algorithms with biased gradient directions. This bias is controllable and depends on the step-size and number of critic iterations per actor update. We perform this analysis for the setting that samples are i.i.d, which may be explicitly guaranteed through the introduction of a new Monte Carlo rollout step for each actor update. As a result, we  establish that actor-critic, independent of any critic method, exhibits convergence to stationary points of the value function that are comparable to stochastic gradient ascent in the non-convex regime. A key distinguishing feature from standard non-convex stochastic programming is that the rates are inherently tied to the bias of the search direction which is determined by the choice of critic scheme. In fact, our methodology is such that a rate for actor-critic can be derived for any critic-only method for which a convergence rate in expectation on the parameters can be expressed. In particular, we establish the rates for actor-critic with temporal difference (TD) \citep{sutton1988learning} and gradient TD (GTD) \citep{sutton2009fast} critic steps. Furthermore, we propose an Accelerated GTD (A-GTD) method derived from accelerations of stochastic compositional gradient descent \citep{wang2017stochastic}, which converges faster than TD and GTD.

\begin{table}[t]\label{tab:rates}
\begin{center}
\begin{tabular}{c|c|c|c|c}
 Critic Method& Convergence Rate & State-Action Space  & Smoothness Assumptions & Algorithm \\
\hline
GTD (SCGD) & $O\left(\epsilon^{-3}\right)$ & Continuous & Assumption \ref{as:SCGD} & Alg \ref{alg:SCGD} \\
\hline
GTD (A-SCGD) & $O\left(\epsilon^{-5/2}\right)$  &Continuous  &  Assumptions \ref{as:SCGD} and \ref{as:A-SCGD} & Alg \ref{alg:A-SCGD}\\
\hline
TD(0) & \blue{ $O\left(\epsilon^{-2 / \sigma }\right)$}  & Continuous & None & Alg \ref{alg:TD0}\\
\hline
TD(0) &   $O\left(\epsilon^{-2}\right)$ & Finite & None& Alg \ref{alg:TD0}
\end{tabular}
\end{center}
\caption{Rates of Actor Critic with Policy Gradient Actor updates and different critic-only methods.\blue{ The term $\sigma$ is the critic stepsize for TD(0) with continuous state-action space, and should be chosen according to conditioning of the feature space (see Section \ref{sec:td_continuous}).} }
\end{table}

\blue{In summary, for the continuous spaces, we establish that A-GTD converges faster than GTD, and the effective convergence rate of TD(0) varies as a result of the feature space representation selected \emph{a priori}. }
In particular, this introduces a trade off between the smoothness assumptions and the rates derived (see Table \ref{tab:rates}). TD has no additional smoothness assumptions, and it achieves a rate of \blue{$O(\epsilon^{-2/\sigma})$.} \blue{This rate is analogous to the non-convex analysis of stochastic compositional gradient descent when $\sigma$ is equal to $0.5$, which is a conservative estimate (see Figure \ref{fig:sigma_lambda}).} Adding a smoothness assumption, GTD achieves the faster rate of $O(\epsilon^{-3})$. By requiring an additional smoothness assumption, we find that A-GTD achieves the fastest convergence rate of $O(\epsilon^{-5/2})$. For the case of finite state action space, actor critic achieves a convergence rate of $O(\epsilon^{-2})$. Overall, the contribution in terms of sample complexities of different actor-critic algorithms may be found in Table \ref{tab:rates}. 

Relative to existing convergence results, actor-critic is classically studied as a form of two time-scale algorithm \citep{borkar1997stochastic}, whose asymptotic stability is well-known via dynamical systems \citep{kushner2003stochastic,borkar2009stochastic}. To wield these approaches to establish finite-time performance, however, concentration probabilities and geometric ergodicity assumptions of the Markov dynamics are required -- see \cite{borkar2009stochastic}. We obviate these complications by focusing on the case where independent trajectory samples are acquirable through querying the environment, for which recent unbiased sampling procedures gave proved adept \citep{santi2018stochastic,Zhang_CDC}.
Relative to existing finite-time characterizations of actor-critic, \citep{cai2019neural} proposes Neural TD updates, which converges to global optimality under a suitably over-parameterized deep neural network (DNN) and initialization. One quandary is how to find these initializations or design DNN architectures to satisfy these conditions. In separate work, the sample complexity of actor-critic has been established in terms of the value function gradient norm when the critic parameters are estimated with non-linear function approximation in a \emph{batch} fashion \citep{yang2018finite}.  \blue{ It is well-known that non-linear function approximators may diverge given by various counterexamples \citep{baird1995residual,tsitsiklis1997analysis}. Our work circumvents this obstacle by considering only well-behaved and well-studied linear function approximation, which includes commonly chosen radial basis function (RBF) networks and auto-encoders fixed at the outset of RL training.  }

Since the original date of submission, efforts to refine the analysis in this work exist: for instance, relaxations of assumptions on the sampling distribution to allow Markovian dependence \citep{shuang2019sample,xu2020non,wu2020finite} and augmentations of the critic objective for practical variance reduction \citep{parisi2019td}.  However, these works require the Markov transition density to mix at an exponentially fast rate in order to establish convergence. Thus, while i.i.d. sampling may be difficult to justify,  exponentially fast mixing often does not hold either, unless algorithm step-sizes are sent to null at an exponential rate. These intricacies have motivated experimental techniques to mitigate correlation among samples using replay buffers \citep{wang2016sample} and parallelization of queries to a generative model \citep{gruslys2018reactor}. However, their exact relationship to mixing rates is opaque. Therefore, for simplicity, in this work we focus on the  i.i.d. case. 

Moreover, sharper sample complexities for actor-critic have been developed \cite{shuang2019sample,xu2020non,wu2020finite}; however, they do not address the possibility of designing alternate policy evaluation schemes than TD(0) updates, and instead focus only on actor-critic in its vanilla form. This is because their perspective is on understanding the sample complexity of actor-critic alone, whereas we provide a unified perspective upon the basis of biased stochastic gradient iteration. In doing so, we are able incorporate a variety of critic updates and illuminate the interplay of problem smoothness, cardinality, and the choice of critic parameterization. In particular, the sample complexity of actor-critic with TD(0) updates for the tabular case given in Corollary \ref{corr:actor_critic_complexity_finite_td} matches \cite{xu2020non,wu2020finite}, but in continuous spaces, depending on the conditioning of the feature map covariance and other problem smoothness conditions, GTD or A-GTD may yield faster convergence, a facet elsewhere unaddressed in the literature.

\newblue{Even more recently, efforts have been made to improve upon the rate of convergence by considering regularized MDP's with overparametrized networks \citep{cayci2022finite}, single critic step \citep{olshevsky2022small}, and single trajectory actor updates \citep{chen2022finite}. Decentralized convergence rates have also been established \citep{chen2022sample, zeng2022learning}. \cite{shen2020asynchronous} show that for both i.i.d. and markovian sampling, there is a linear speedup for the decentralized setting whose is bottleneck is the slowest mixing chain. All of the aforementioned results require the assumption that the probability for any action given a state is strictly positive, which we do not require.
}


We evaluate actor-critic with TD, GTD, and A-GTD critic updates on both a navigation problem and the canonical pendulum problem. For the navigation problem, we find that indeed A-GTD converges faster than both GTD and TD. Interestingly, the stationary point it reaches is worse than GTD or TD. This suggests that the choice of critic scheme illuminates an interplay between optimization and generalization that is less-well understood in reinforcement learning \citep{boyan1995generalization,bousquet2002stability}.
\newblue{For the pendulum problem, we also find that A-GTD converges fastest with respect to the gradient norm, which is consistent with our main convergence results. In particular, we again find that the faster convergence in gradient norm results the stationary point having a lower cumulative reward. We additinally consider advantage actor-critic in our simulations \citep{mnih2016asynchronous}.  } A detailed discussion on the results and implications can be found in section \ref{sec:num_results}.
The remainder of the paper is organized as follows. Section \ref{sec:prob} describes the problem of reinforcement learning and defines common assumptions which we use in our analysis. In section \ref{sec:alg}, we derive a generic actor-critic algorithm from an optimization perspective and describe how the algorithm would be amended given different policy evaluation methods. The derivation of the convergence rate for generic actor-critic is presented in section \ref{sec:convergence_analysis}, and the specific analysis for Gradient, Accelerated Gradient, and vanilla Temporal Difference are characterized in sections \ref{sec:GTD} and \ref{sec:TD0}.

%

\section{Reinforcement Learning}\label{sec:prob}
We consider the Reinforcement Learning (RL) problem where an agent moves through a state space $\mathcal{S}$ and takes actions that belong to some action set $\mathcal{A}$, and the state/action spaces are assumed to be continuous compact subsets of Euclidean space: $\mathcal{S}\subset  \mathbb{R}^q$ and $\mathcal{A}\subset \mathbb{R}^p$.  Every time an action is taken, the agent transitions to its next state that depends only on its current state and action. Moreover, a reward is revealed by the environment. In this situation, the agent would like to accumulate as much reward as possible in the long term, which is referred to as value. Mathematically this problem definition may be encapsulated as a Markov decision process (MDP), which is a tuple $(\mathcal{S},\mathcal{A},\mathbb{P},R,\gamma)$ with Markov transition density $\mathbb{P}(s'\mid s,a):\mathcal{S}\times\mathcal{A}\to \mathbb{P}(\mathcal{S})$ that determines the probability of moving to state $s'$. Here, $\gamma\in(0,1)$ is the discount factor that parameterizes the value of a given sequence of actions, which we will define shortly.

At each time $t$, the agent executes an action $a_t\in\mathcal{A}$ given the current state $s_t\in\mathcal{S}$, following a stochastic policy $\pi:\mathcal{S}\to \mathbb{P}(\mathcal{A})$, i.e., $a_t\sim \pi(\cdot\mid s_t)$. 
Then, given the state-action pair $(s_t,a_t)$, the agent observes a (deterministic) reward $r_t=R(s_t,a_t)$ and transitions to a new state $s_t' \sim \mathbb{P}(\cdot \mid s_t, a_t)$ according to a Markov transition density. For any policy $\pi$, define the value function $V_{\pi}:\mathcal{S}\to\mathbb{R}$ as 
\begin{equation}\label{equ:value}
V_{\pi}(s):=\E_{a_t\sim \pi(\cdot\mid s_t),s_{t+1}\sim \mathbb{P}(\cdot\mid s_t,a_t)}\bigg(\sum_{t=0}^\infty \gamma^t r_t\mid s_0=s\bigg),
\end{equation}
which is a measure of the long term average reward accumulation discounted by $\gamma$. We can further define the value $V_{\pi}:\mathcal{S}\times\mathcal{A}\to\mathbb{R}$ conditioned on a given initial action as the action-value, or Q function as $Q_{\pi}(s,a)=\E\big(\sum_{t=0}^\infty \gamma^t r_t\mid s_0=s,a_0=a\big)$. Given any initial state $s_0$, the goal of the agent is to find the optimal policy $\pi$ that maximizes the long-term return $V_{\pi}(s_0)$, i.e., to solve the following optimization problem 
\begin{equation}\label{equ:max_goal}
\max_{\pi\in\Pi}~~J(\pi) \;, \  \text{where} \  \quad\ J(\pi):=V_{\pi}(s_0).
\end{equation}
In this work, we investigate actor-critic methods to solve \eqref{equ:max_goal}, which is a hybrid RL method that fuses key properties of policy search and approximate dynamic programming. To ground the discussion, we first derive the canonical policy search technique called policy gradient method, and explain how actor-critic augments policy gradient. Begin by noting that to address \eqref{equ:max_goal}, one must search over an arbitrarily complicated function class $\Pi$ which may include those which are unbounded and discontinuous. To mitigate this issue, we parameterize the policy $\pi$ by a vector $\theta\in\mathbb{R}^d$, i.e., $\pi=\pi_{\theta}$, yielding RL tools called \emph{policy gradient methods} \citep{konda2000actor,bhatnagar2009natural,castro2010convergent}. 
Under this specification, the search over arbitrarily complicated function class $\Pi$ to \eqref{equ:max_goal} may be reduced to Euclidean space $\mathbb{R}^d$, i.e., a vector-valued optimization, $\max_{\theta\in\mathbb{R}^d}J(\pi_\theta):=V_{\pi_{\theta}}(s_0)$. Subsequently, we denote $J(\pi_\theta)$ by $J(\theta)$ for notational convenience. 

We now make the following standard assumption on the regularity of the MDP problem and the parameterized policy $\pi_\theta$, which are the same conditions as \cite{Zhang_SICON}, as well as an assumption to bound the state-action feature representation.

\begin{assumption}\label{assum:regularity}
	Suppose the reward function $R$ and the parameterized policy $\pi_\theta$ satisfy the following conditions:
\begin{enumerate}[label=(\roman*)]
		\item The absolute value of the reward $R$ is   bounded   uniformly by $U_{R}$, i.e., $|R(s,a)|\in[0,U_{R}]$ for any $(s,a)\in\mathcal{S}\times\mathcal{A}$. \label{as:bounded_reward}
		\item The policy $\pi_{\theta}$ is  differentiable with respect to $\theta$, and the score function $\nabla\log\pi_{\theta}(a\mid s)$ is $L_\Theta$-Lipschitz and has bounded norm, i.e., for any  $(s,a)\in\mathcal{S}\times\mathcal{A}$, 
		\begin{align}
		&\|\nabla \log\pi_{\theta^1}(a\mid s)-\nabla \log\pi_{\theta^2}(a\mid s)\|\leq L_\Theta\cdot \|\theta^1-\theta^2\|,\text{~~for any~~} \theta^1,\theta^2,\label{equ:assum_L_Lip}\\
		&\|\nabla\log\pi_{\theta}(a\mid s)\|\leq B_{\Theta},\text{~~for any~~} \theta.\label{equ:assum_score_bnded}
		\end{align}
		\label{as:bounded_policy}
	\end{enumerate}
\end{assumption}
Note that the  boundedness of the reward function in Assumption \ref{assum:regularity}\ref{as:bounded_reward}  is standard in policy search  algorithms \citep{bhatnagar2008incremental,bhatnagar2009natural,castro2010convergent,zhang2018fully}. Observe that with  $R$, we have the Q function is absolutely upper bounded by $U_{R}/(1-\gamma)$, since by definition 
\begin{align}\label{equ:Q_bndness}
|Q_{\pi_\theta}(s,a)|\leq 
\sum_{t=0}^\infty \gamma^t \cdot U_{R}=\frac{U_{R}}{1-\gamma}, ~~\text{for any}~~ (s,a)\in\mathcal{S}\times\mathcal{A}.
\end{align}
The same bound also applies for $V_{\pi_\theta}(s)$ for any $\pi_\theta$ and $s\in\mathcal{S}$ and thus the objective $J(\theta)$ which is defined as $V_{\pi_\theta}(s_0)$, satisfies,
\begin{align} \label{equ:J_bound}
|V_{\pi_\theta}(s)|\leq\frac{U_{R}}{1-\gamma},~~\text{for any $s\in\mathcal{S}$},~~\quad |J(\theta)|\leq \frac{U_{R}}{1-\gamma}.
\end{align}  
We note that the conditions  \eqref{equ:assum_L_Lip} and \eqref{equ:assum_score_bnded} have appeared in recent analyses of policy search \citep{castro2010convergent,pirotta2015policy,papini2018stochastic}, and are satisfied by canonical policy parameterizations such as Boltzmann policy \citep{konda1999actor} and Gaussian policy \citep{doya2000reinforcement}. For example, for Gaussian policy\footnote{We observe that in practice, the action space $\mathcal{A}$ is bounded, which requires a truncated Gaussian policy to be used over $\mathcal{A}$, as in \citep{papini2018stochastic}. } in continuous spaces, $\pi_\theta(\cdot\mid s)=\mathcal{N}(\phi(s)^\top\theta,\sigma^2)$, where $\mathcal{N}(\mu,\sigma^2)$ denotes the Gaussian distribution with mean $\mu$ and variance $\sigma^2$ \blue{ and $\phi(s)$ denotes some state feature representation.} Then the score function has the form $[a-\phi(s)^\top\theta]\phi(s)/\sigma^2$, which satisfies \eqref{equ:assum_L_Lip} and \eqref{equ:assum_score_bnded} if the feature vectors $\phi(s)$ have bounded norm, the parameter $\theta$ lies some bounded set, and the action $a\in\mathcal{A}$ is bounded.

Generally, the value function is nonconvex with respect to the parameter $\theta$, meaning that obtaining a globally optimal solution to \eqref{equ:max_goal} is out of reach unless the problem has  additional structured properties,  as in  phase retrieval  \citep{sun2016geometric},  matrix factorization \citep{li2016symmetry}, and tensor decomposition \citep{ge2015escaping}, among others. Thus, our goal is to design actor-critic algorithms to attain stationary points of the value function $J(\theta)$. Moreover, we characterize the sample complexity of actor-critic, a noticeable gap in the literature for an algorithmic tool decades old \citep{konda1999actor} at the heart of the recent innovations of artificial intelligence architectures \citep{silver2017mastering}.

\section{From Policy Gradient to Actor-Critic}\label{sec:alg}

In this section, we derive actor-critic method \citep{konda1999actor} from an optimization perspective: we view actor-critic as a way of doing stochastic gradient ascent with biased ascent directions, and the magnitude of this bias is determined by the number of critic evaluations done in the inner loop of the algorithm. The building block of actor-critic is called policy gradient method, a type of direct policy search, based on stochastic gradient ascent. Begin by noting that the gradient of the objective $J(\theta)$ with respect to policy parameters $\theta$, owing to the Policy Gradient Theorem \citep{sutton2000policy}, has the following form:

\begin{align}
\nabla J(\theta)&=\int_{s\in\mathcal{S},a\in\mathcal{A}}\sum_{t=0}^\infty\gamma^t \cdot p(s_t=s\mid s_0,\pi_\theta)\cdot\nabla \pi_{\theta}(a\mid s)\cdot Q_{\pi_\theta}(s,a)dsda\label{equ:policy_grad_1}\\
&=\frac{1}{1-\gamma}\int_{s\in\mathcal{S},a\in\mathcal{A}}(1-\gamma)\sum_{t=0}^\infty\gamma^t \cdot p(s_t=s\mid s_0,\pi_\theta)\cdot\nabla \pi_{\theta}(a\mid s)\cdot Q_{\pi_\theta}(s,a)dsda\notag\\
&=\frac{1}{1-\gamma}\int_{s\in\mathcal{S},a\in\mathcal{A}}\rho_{\pi_\theta}(s)\cdot\pi_{\theta}(a\mid s)\cdot\nabla \log[\pi_{\theta}(a\mid s)]\cdot Q_{\pi_\theta}(s,a)dsda\notag\\
&=\frac{1}{1-\gamma}\cdot\mathbb{E}_{(s,a)\sim \rho_{\theta}(\cdot,\cdot)}\big[\nabla\log\pi_{\theta}(a\mid s)\cdot Q_{\pi_\theta}(s,a)\big]. \label{equ:policy_grad_3}
\end{align}
\blue{This expression follows from rolling the sum forward, repeatedly applying Bellman's evaluation equation, and exploiting the Markov property of the transition kernel, together with multiplying and dividing by $\pi_\theta$ and rewriting the denominator in terms of the score function via the fact that $\nabla_x \log(x) = 1/x$, as in \citep{sutton2000policy,Zhang_CDC}.}
In the preceding expression, $p(s_t=s\mid s_0,\pi_\theta)$ denotes the probability of state $s_t$ equals $s$ given initial state $s_0$ and policy $\theta$, which is occasionally referred to as the occupancy measure, or the Markov chain transition density induced by policy $\pi$. Moreover, $\rho_{\pi_\theta}(s)=(1-\gamma)\sum_{t=0}^\infty\gamma^t p(s_t=s\mid s_0,\pi_\theta)$ is the ergodic distribution associated with the MDP for fixed policy, which is shown to be a valid distribution \citep{sutton2000policy}. For future reference, we define $\rho_{\theta}(s,a)=\rho_{\pi_\theta}(s)\cdot \pi_{\theta}(a\mid s)$.
The derivative of the logarithm of the policy $\nabla\log[\pi_{\theta}(\cdot\mid s)]$ is usually referred to as the \emph{score function} corresponding to the probability distribution $\pi_{\theta}(\cdot\mid s)$ for any $s\in\mathcal{S}$.

Next, we discuss how \eqref{equ:policy_grad_3} can be used to develop stochastic methods to address \eqref{equ:max_goal}. Unbiased samples of the gradient $\nabla J(\theta)$  are required to perform the stochastic gradient ascent, which hopefully converges to a stationary solution of the nonconvex maximization. One way to obtain an estimate of the gradient $\nabla J(\theta)$ is to evaluate the score function and $Q$ function at the end of a rollout whose length is drawn from a geometric distribution with parameter $1-\gamma$ \citep{Zhang_SICON}[Theorem 4.3]. If the $Q$ function evaluation is unbiased, then the stochastic estimate of the gradient $\nabla J(\theta)$ is unbiased as well. We therefore define the stochastic estimate by
\begin{equation} \label{equ:gradient_estimate}
\hat\nabla J(\theta) := \frac{1}{1 - \gamma} \hat Q_{\pi_\theta} (s_T, a_T) \nabla \log \pi_{\theta}(a_T\vert s_T),
\end{equation}
\blue{where the tuple $(s_T, a_T)$ is drawn from end of the geometric rollout of length $T\sim\textbf{Geom}(1-\gamma)$. Of course, such an approach is very inefficient with respect to samples, as it does not utilize the state action transitions up until the final tuple. Using the entire trajectory for the actor update comes at the cost of a biased gradient estimate. Before we characterize this bias, we will discuss how to evaluate the $Q$ function using the single point estimation for simplicity.}

We consider the case where the $Q$ function admits a linear parametrization of the form $\hat Q_{\pi_\theta}(s,a) = \xi^\top \varphi(s,a)$, which in the literature on policy search is referred to as the \emph{critic} \citep{konda1999actor}, as it ``criticizes" the performance of actions chosen according to policy $\pi$.  \blue{We let $\varphi:\mathcal{S} \times\mathcal{A} \rightarrow \mathbb{R}^p$ be a (possibly nonlinear) feature map such as a network of radial basis functions or an auto-encoder known \emph{a priori}.} \blue{ The choice to consider the $Q$ function with a linear function approximator comes from the well known convergence results of linear critic-only methods. In contrast, nonlinear function approximators suffer from the possibility of divergence, as is demonstrated by well known counterexamples \citep{baird1995residual,tsitsiklis1997analysis}.}

\blue{The critic parameter $\xi$ belongs to a bounded set $\xi\in \Xi \subset \mathbb{R}^p$ such that
\begin{equation}\label{equ_bounded_xi}
\|\xi\| \leq C_\xi \textrm{~for all~} \xi \in \Xi
\end{equation}} This is reasonable because \eqref{equ:Q_bndness} guarantees boundeness of the true $Q$ function. The boundedness of \blue{the estimate $\hat Q$} follows from requiring the feature map $\varphi(s,a)$ to be bounded, an assumption which can be achieved through normalization, \blue{which we subsequently state
\begin{assumption} \label{as:bounded_feature_map}
	For any state action pair $(s,a) \in \mathcal{S} \times \mathcal{A}$, the norm of the feature representation $\varphi(s,a)$ is bounded by a constant $C_\varphi\in \mathbb{R}_+$. 
\end{assumption}	
}
%
We also bound the true gradient of the objective function 
\begin{equation}\label{equ:grad_bound}
\| \nabla J(\theta_k)\| \leq C_\nabla,
\end{equation}
which is established by \eqref{equ:policy_grad_3} being bounded as a result of $|Q| \leq U_R/(1-\gamma)$ [c.f. \eqref{equ:Q_bndness}] and $\|\nabla \log \pi_\theta(a\vert s)\| \leq B_\Theta$ [c.f. \eqref{equ:assum_score_bnded}].

 Moreover, \blue{for each actor update $k$}, we estimate the parameter $\xi_k$ that defines the $Q$ function from \blue{an online} policy evaluation (critic-only) method after some $T_C(k)$ iterations, where $k$ denotes the number of policy gradient updates. Thus, we may write the stochastic gradient estimate as 
\begin{equation} \label{equ:estimate_xi}
\hat\nabla J(\theta) = \frac{1}{1 - \gamma} \xi_k^\top\varphi(s_T, a_T) \nabla \log \pi_{\theta}(a_T\vert s_T).
\end{equation}
If the estimate of the $Q$ function is unbiased, i.e., $\E[\xi_k^\top \varphi(s_T,a_T) \,|\,\theta, s, a]= Q(s,a)$, then $\E[\hat \nabla J(\theta) \,|\,\theta] = \nabla J(\theta)$ (c.f.  \citep{Zhang_SICON}[Theorem 4.3]). Typically, critic-only methods do not give unbiased estimates of the $Q$ function; however, in expectation the rate at which their bias decays is proportional to the number of $Q$ estimation steps. In particular, denote $\xi_*$ as the parameter for which the $Q$ estimate is unbiased:
\begin{equation} \label{equ:unbiased_stochastic}
\E[\xi_*^\top \varphi(s,a)] =  \E[\hat Q_{\pi_\theta}(s,a)] = Q(s,a).
\end{equation}
Hence, by adding and subtracting the true estimate of the parametrized $Q$ function to \eqref{equ:estimate_xi}, we arrive at the fact the policy search direction admits the following decomposition:
\begin{equation} \label{equ:estimate_star}
\hat\nabla J(\theta) = \frac{1}{1 - \gamma} (\xi_k - \xi_*)^\top\varphi(s_T, a_T) \nabla \log \pi_{\theta}(a_T\vert s_T) + \frac{1}{1 - \gamma} \xi_*^\top\varphi(s_T, a_T) \nabla \log \pi_{\theta}(a_T\vert s_T).
\end{equation}
The second term is the unbiased estimate of the gradient $\nabla J(\theta)$, whereas the first defines the difference of the critic parameter at iteration $k$ with the true estimate $\xi_*$. For linear parameterizations of the $Q$ function, policy evaluation methods establish convergence in mean of the bias
\begin{equation} \label{equ:critic_bound}
\E[\|\xi_k - \xi_*\|] \leq g(k),
\end{equation}
where $g(k)$ is some decreasing function.  We address cases where the critic bias decays at rate $k^{-b}$ for $b\in (0,1]$, due to the fact that several state of the art works on policy evaluation may be mapped to the form \eqref{equ:critic_bound} for this specification  \citep{wang2017stochastic,dalal2018finite}. \blue{We formalize this with the following proposition.
%
\begin{proposition} \label{prop:critic_bound}
\newblue{Given some $b \in (0,1]$, there exists a constant $L_1 > 0$ such that
\begin{equation} \label{equ:critic_bound2}
\E[\|\xi_k - \xi_*\|] \leq L_1k^{-b}.
\end{equation}
This implies the expected error of the critic parameter is bounded by $O(k^{-b})$.}
\end{proposition}
}%
Recently, alternate rates have been established as $O(\log k / k)$; however, they concede that $O(1/k)$ rates may be possible \citep{bhandari2018finite,zou2019finite}. Thus, we subsume recent sample complexity characterizations of policy evaluation as is described in Proposition \ref{prop:critic_bound}. \blue{Proposition \ref{prop:critic_bound} is an intrinsic property of many policy evaluation schemes, and thus permits one to substitute the standard subsampling rates of a Monte Carlo-based estimator for the Q function (as in REINFORCE \citep{sutton2000policy}) with one that is estimated online using, e.g., temporal difference learning. Hence its role is critical in relating the bias of using critic estimators rather than unbiased gradient estimates to the number of critic steps.
}

More specifically, \eqref{equ:estimate_star} is nearly a valid ascent direction: it is approximately an unbiased estimate of the gradient $\nabla J(\theta)$ since the first term becomes negligible as the number of critic estimation steps increases. Based upon this observation, we propose the following \blue{full trajectory} variant of actor-critic method \citep{konda1999actor}: run a critic estimator (policy evaluator) for $T_C(k)$ steps, whose output is critic parameters $\xi_{k}$. We denote the critic estimator by $\textbf{Critic:}\mathbb{N} \to  \mathbb{R}^p$ which returns the parameter $\xi_{k} \in \mathbb{R}^p$ after $T_C(k) \in \mathbb{N}$ iterations. 
Then, simulate a trajectory of length $H(k)$, and update the actor (policy) parameters $\theta$ as: \blue{
\begin{equation}\label{eq:actor_update}
\theta_{k+1} = \theta_k + \eta_k \frac{1}{1-\gamma} \sum_{t= 1}^{H(k)} \xi_{k}^\top \varphi(s_{t}, a_{t}) \nabla \log \pi_{\theta_k}(s_{t},a_{t}|\theta_k).
\end{equation}
Note that we make the number of critic estimation steps and horizon length grow with $k$. Increasing $T$ and $H$ with $k$ allows us to control the bias of the estimate as is seen in Proposition \ref{prop:critic_bound} for the critic evaluations and in the following theorem for horizon length.}

\blue{ Now, we will characterize the bias between the gradient estimate using the entire trajectory of length $H(k)$.  Let $\tau = \left\{s_1, a_1, \dots, s_{H-1}, a_{H-1}, s_{H}\right\}$ be a sampled trajectory of length $H$. Define $F_t$ to be the product of the true state action ($Q$) function with the score function evaluated at the tuple $(s_t, a_t)$, namely 
	\begin{equation}
	F_t := Q(s_t, a_t) \nabla_\theta \newblue{\log} \pi_\theta(s_t,a_t).
	\end{equation}
	One can consider constructing an estimate of the policy gradient using the entire trajectory of length $H$ by 
	\begin{equation}
	\hat g_H = \sum_{t = 1}^{H} \gamma^{t-1}F_t.
	\end{equation}
	The following theorem establishes the bias between the true policy gradient and the finite horizon estimate. 
	\begin{theorem} \label{thm:finite_horizon}
		Let Assumption \ref{assum:regularity} be in effect. Then it is true that for some finite $C_1$,
		$$\left\| \mathbb{E}_\tau\left[\hat g_H \right]- \nabla_\theta J(\theta) \right\| \leq \gamma^{H-1}C_1.$$
	\end{theorem}
	\begin{proof}
		First we will show that $\E_\tau \left[\sum_{t=1}^\infty \gamma^{t-1}F_t\right] = \nabla_\theta J(\theta)$. We let $\textrm{Pr}(s_t = s \vert s_1)$ denote the probability the state at time $t$ is equal to $s$ given the initial state $s_1$.
		\begin{equation}
		\begin{split}
		\mathbb{E}\left[\sum_{t=1}^\infty \gamma^{t-1} F_t\right]&= \sum_{t=1}^\infty \gamma^{t-1} \int_\mathcal{S} \mathbb{E}\left[F_t | s_t = s\right] \textrm{Pr}\left(s_t = s\vert s_1\right)\textrm{d}s \\
		&= \sum_{t = 1}^\infty \gamma^{t-1} \int_{\mathcal{S}} \int_{\mathcal{A}} Q(s,a)\nabla_\theta\newblue{\log} \pi_\theta (s,a) \textrm{d}a \textrm{Pr}(s_t = s|s_1) \textrm{d}s \\
		&= \int_\mathcal{S} \int_\mathcal{A} Q(s,a) \nabla_\theta \newblue{\log}\pi_\theta (s,a) \textrm{d}a \sum_{t = 1}^\infty \gamma^{t-1}\textrm{Pr}(s_t = s|s_1)\textrm{d}s \\
		&= \int_\mathcal{S} \int_\mathcal{A} Q(s,a) \nabla_\theta\newblue{\log} \pi_\theta (s,a) \textrm{d}a  \rho^{\pi_\theta}(s)\textrm{d}s\\
		&= \mathbb{E}_{s\sim \rho^{\pi_\theta}(s)} \left[\int_\mathcal{A} Q(s,a) \nabla_\theta\newblue{\log} \pi_\theta (s,a) \textrm{d}a\right]\\
		&= \mathbb{E}_{s\sim \rho^{\pi_\theta}(s), a \sim \pi_\theta(s, \cdot)} \left[Q(s,a) \nabla_\theta \log \pi_\theta(s,a)\right]\\
		&= \nabla_\theta J(\theta)
		\end{split}
		\end{equation}
		By Fubini's Theorem, we are able to exchange the summation and integrals due to the regularity assumptions. Let $\hat g_\infty = \sum_{t = 1}^\infty \gamma^{t-1} F_t$. Then 
		\begin{equation}
		\hat g_\infty - \hat g_H = \gamma^{H-1} \sum_{t = 0}^\infty \gamma^t F_{t+H +1}
		\end{equation}
\newblue{By the regularity assumptions, we can bound $F_t$ by $U_RB_\Theta / (1-\gamma)$.  As such, we establish the bound $\sum_{t = 0}^\infty \gamma^t F_{t+H +1}  \leq \sum_{t=0}^\infty \gamma^{t} U_RB_\Theta / (1-\gamma) \leq U_RB_\Theta /(1-\gamma)^2=:C_1 \leq \infty$}		
		Taking the norm of the expectation completes the proof.
	\end{proof}
	
}
\blue{
Theorem \ref{thm:finite_horizon} holds under the assumption that the true $Q$ function is accessible. Of course, only a biased version of the critic is available through the uses of a critic, as described before. The algorithm we propose is the actor-critic variant of the finite horizon gradient estimate. The actor parameter update takes the following form:
\begin{equation} \label{equ:fin_H_update}
\theta_{k+1} = \theta_k + \eta_k \hat g_{H}^{AC} =  \theta_k + \frac{1}{1-\gamma}\eta_k \sum_{t = 1}^{H(k)}  \gamma^{t-1} \xi_{k}^\top \varphi(s_{t}, a_{t}) \nabla \log \pi_{\theta_k}(s_{t},a_{t}|\theta_k).
\end{equation}
The following theorem characterizes the bias of the stochastic gradient estimate. 
\begin{theorem} \label{thm:finite_bias}
	Let Assumptions \ref{assum:regularity} and \ref{as:bounded_feature_map} be in effect. Then, when proposition \ref{prop:critic_bound} is in effect, it is true that for a horizon of length $H$ and $T$ critic evaluations,
	$$\left\|\E_\tau \left[\hat g_H^{AC}\right] - \nabla_\theta J(\theta) \right\| \leq C_1 \gamma^{H} + C_2T^{-b}$$
\end{theorem}
\begin{proof}
	Let $F_{AC,t}:= \xi_k^\top \varphi(s_t,a_t) \nabla_\theta \log \pi_{\theta}(s_t,a_t)$. Then 
	\begin{equation}
	\begin{split}
	\E_\tau\left[\hat g_{\infty}^{AC}\right] &= \E_\tau \left[\sum_{t=1}^\infty \gamma^{t-1} F_{AC,t}\right] \\
	&= \E_\tau \left[\sum_{t=1}^\infty \gamma^{t-1} \left(F_t +F_{AC,t} -F_t\right)\right]\\
	&= \E_\tau \left[\sum_{t=1}^\infty \gamma^{t-1} F_t\right] +  \E_\tau \left[\sum_{t=1}^\infty \gamma^{t-1} \left(F_{AC,t} -F_t\right)\right]\\
	&= \nabla_\theta J(\theta) + \E_\tau \left[\sum_{t=1}^\infty \gamma^{t-1} \left(F_{AC,t} -F_t\right)\right]\\ 
	\end{split}
	\end{equation}
	The final term can be considered an error term. Consider the difference 
\begin{equation}
F_{AC,t} - F_t = \left(Q(s_t,a_t) - \xi_k^\top \varphi(s_t,a_t) \right) \nabla \log \pi_{\theta}(s_t,a_t).
\end{equation}
Let $Q(s_t,a_t) = \xi_*^\top \varphi(s_t,a_t)$. Then by assumptions \ref{assum:regularity} and \ref{as:bounded_feature_map} and proposition \ref{prop:critic_bound}, 
\begin{equation}
|F_{AC,t} - F_t| \leq T^{-b}L_1C_\varphi B_\Theta
\end{equation}
This implies 
\begin{equation}
\left\|\hat g_{\infty}^{AC} - \nabla_\theta J(\theta)\right\| \leq T^{-b}L_1C_\varphi B_\Theta \frac{1}{1-\gamma} = C_2 T^{-b}
\end{equation}
Following the same logic as Theorem \ref{thm:finite_horizon}, we can bound the difference between the finite horizon estimate and the infinite horizon actor-critic estimate by
\begin{equation}
\|\hat g_\infty^{AC} - \hat g_H^{AC}\| \leq C_1 \gamma^{H-1}.
\end{equation}
We evoke triangle inequality to complete the proof.
\begin{equation}
\|\hat g_\infty - \hat g_H^{AC}\| = \|\hat g_\infty - \hat g_\infty^{AC} + \hat g_\infty^{AC} - \hat g_H^{AC}\| \leq \|\hat g_\infty - \hat g_\infty^{AC}\| + \|\hat g_\infty^{AC} - \hat g_H^{AC}\| \leq C_1 \gamma^{H-1} + C_2 T^{-b}.
\end{equation}
This concludes the proof.
\end{proof}

The fact that the estimate $\hat g^{AC}_{H}$ is bounded comes from the fact that $\hat g^{AC}_\infty$ is bounded. We formalize this for use in the analysis
\begin{equation} \label{eqn:actor_critic_bounded_variance}
\E(\|\hat g^{AC}_H \| ) \leq \E(\|\hat g^{AC}_\infty \| ) \leq   \frac{C_\varphi C_\xi B_\Theta}{(1- \gamma) } =: \sigma,
\end{equation}
where $C_\varphi$, $C_\xi$ and $B_\Theta$ come from Assumption \ref{as:bounded_feature_map}, \eqref{equ_bounded_xi} and Assumption \ref{assum:regularity} \ref{as:bounded_policy} respectively.
}

\newblue{Theorem \ref{thm:finite_bias} establishes the bias on the stochastic gradient update. The bias can be decreased by increasing $T$, the number of critic update steps per each actor step, and  $H$, the horizon for the actor update.  In our main result, we will set both of these quantities to grow linearly with $k$, meaning that we decrease the bias with each actor update step  (see Theorem \ref{thm:general_rate}).  In our numerical results, we show that selecting a large enough constant $T$ and $H$ is sufficient(see Section \ref{sec:num_results}) .     }
%
\begin{algorithm}[t]
	\caption{Generic Actor-Critic}
	\begin{algorithmic}[1]
		\Require 
		\Statex $s_0\in \mathbb{R}^n$, $\theta_0$, $\xi_0$,  stepsize  $\{\eta_k\}$, Policy evaluation method \textbf{Critic}: $\mathbb{N} \to  \mathbb{R}^p$, $\gamma \in (0,1)$ 
		\For{$k = 1, \dots$}
		\State $\xi_{k} \leftarrow$ \textbf{Critic}($T_C(k)$) ~~~\newblue{[e.g. See Alg \ref{alg:SCGD}-\ref{alg:TD0}]}
		\State \blue{$\theta_{k+1} \leftarrow \theta_k + \frac{1}{1-\gamma}\eta_k \sum_{t = 1}^{H(k)} \xi_{k}^\top \varphi(s_{t}, a_{t}) \nabla \log \pi_{\theta_k}(s_{t},a_{t}|\theta_k) $}
		\EndFor
	\end{algorithmic}
	\label{alg:AC_generic}
\end{algorithm}

We summarize the aforementioned procedure, which is agnostic to particular choice of critic estimator, as Algorithm \ref{alg:AC_generic}. \blue{We acknowledge that the actor-critic algorithm proposed in Algorithm \ref{alg:AC_generic} differs from \cite{konda1999actor} in that rather than updating the actor and critic in tandem, the critic learns the state-action (Q) function from scratch at each update of the actor algorithm. The classical version of the algorithm can be recovered by setting $T_C(k) = 1$ and initializing the critic parameter to the previous step. Existing convergence proofs of this format are limited to asymptotic convergence only, where the critic steps at a faster learning rate than the actor. As such, this batch-type approach emulates this behavior, as the critic must learn something meaningful before the actor can update. As such, one might relate our work to \cite{yang2018finite}; however, unlike their work, we are not only able to prove convergence to a stationary point of the original objective by increasing the number of critic evaluations at each actor step rather than keeping it fixed, but also, we use the entire trajectory rather than a single state action pair sampled from the discounted state distribution.}

{\bf \noindent Examples of Critic Updates} We note that $\textbf{Critic:}\mathbb{N} \to  \mathbb{R}^p$ admits two canonical forms: temporal difference (TD) \citep{sutton1988learning} and gradient temporal difference (GTD)-based estimators \citep{sutton2008convergent}. The TD update for the critic is given as
\begin{equation}\label{eq:td_update}
\delta_{t} = r_{t} + \blue{\gamma \xi^\top_t\varphi(s_t', a_t')   - \xi^\top_t\varphi(s_t,a_t)} \; , \quad \xi_{t+1} = \xi_t + \alpha_t\delta_t\varphi(s_t,a_t)
\end{equation}
whereas for the GTD-based estimator for the critic, we consider the update
\begin{align}\label{eq:gtd_update}
\delta_{t} &= r_{t} +  \blue{\gamma\xi_t^\top \varphi(s'_{t}, a'_t) - \xi^\top_t\varphi(s_t,a_t)}  \; ,
 \quad z_{t + 1} = (1- \beta_t)z_t + \beta_t \delta_t , \nonumber \\
 \xi_{t+1} &= \xi_t - 2\alpha_t z_{t+1}[\gamma\varphi(s'_{t}, a'_{t}) - \varphi(s_t,a_t) ]
\end{align}
We further analyze a modification of GTD updates proposed by \citep{wang2017stochastic} that incorporates an extrapolation technique to reduce bias in the estimates and improve error dependency, which is distinct from accelerated stochastic approximation with Nesterov Smoothing \citep{nesterov1983method}. With $y_0 = 0$ and $z_t$ defined for $t = 1, \dots$, the accelerated GTD (A-GTD) update becomes
\begin{align}\label{eq:accelerated_gtd_update}
\xi_{t+1} &= \xi_t - 2\alpha_t (\gamma \varphi(s'_t,a'_t) - \varphi(s_t,a_t))y_t \\
z_{t+1} &= -\left(\frac{1}{\beta_t} - 1\right)\xi_t + \frac{1}{\beta_t}\xi_{t+1} \nonumber \\
y_{t+1}& = (1-\beta_t)y_t + \beta_t (r(s_t,a_t) + z_{t+1}^\top\left(\gamma\varphi(s'_t,a'_t) -\varphi(s_t,a_t) \right) \nonumber
\end{align}
Subsequently, we shift focus to characterizing the mean convergence of actor-critic method given any policy evaluation method satisfying \eqref{equ:critic_bound} in Section \ref{sec:convergence_analysis}. Then, we specialize the sample complexity of actor-critic to the cases associated with critic updates \eqref{eq:td_update} - \eqref{eq:accelerated_gtd_update}, which we respectively call Classic (Algorithm \ref{alg:TD0}), Gradient (Algorithm \ref{alg:SCGD}), and Accelerated Actor-Critic (Algorithm \ref{alg:A-SCGD}).

\blue{ 

\begin{remark} \label{rem:advantages}
	We wish to emphasize that a major advantage of this generic characterization of actor-critic admits the ability to interchange critic only methods to estimate the state-action (Q)  function. The merit is twofold, as it can extend to faster convergence rates and fewer assumptions. In particular, recent works have shown tighter sample complexity bounds for critic-only methods for convergence in probability\cite{}, which suggests that existing bounds on convergence in expectation are not necessarily tight. Furthermore, so long as the convergence of the critic takes the form of Proposition \ref{prop:critic_bound}, the i.i.d. assumption for the critic can be lifted. The general conditions for stability of trajectories with Markov dependence, i.e., negative Lyapunov exponents for mixing rates, may be found in \citep{meyn2012markov}. 
\end{remark}
}

%

\section{Convergence Rate of Generic Actor-Critic} \label{sec:convergence_analysis}
In this section, we derive the rate of convergence in expectation for the variant of actor-critic defined in Algorithm \ref{alg:AC_generic}, which is agnostic to the particular choice of policy evaluation method used to estimate the $Q$ function used in the actor update. Unsurprisingly, we establish that the rate of convergence in expectation for actor-critic depends on the critic update used. Therefore, we present the main result in this paper for any generic critic method. Thereafter, we specialize this result to two well-known choices of policy evaluation previously described \eqref{eq:td_update} - \eqref{eq:gtd_update}, as well as a new variant that employs acceleration \eqref{eq:accelerated_gtd_update}.

We begin by noting that under Assumption \ref{assum:regularity}, one may establish Lipschitz continuity of the policy gradient $\nabla J(\theta)$  \citep{Zhang_SICON}[Lemma 4.2]. 
\begin{lemma}[Lipschitz-Continuity of Policy Gradient]\label{lemma:lip_policy_grad}
The policy gradient $\nabla J(\theta)$ is Lipschitz continuous with some constant $L>0$, i.e., 
	for any $\theta^1,\theta^2\in\mathbb{R}^d$
	\begin{align}
	\|\nabla J(\theta^1)-\nabla J(\theta^2)\|\leq L\cdot \|\theta^1-\theta^2\|. 
	\end{align}
\end{lemma}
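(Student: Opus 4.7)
The plan is to exploit the trajectory-based form of the Policy Gradient Theorem and bound the difference $\nabla J(\theta^1) - \nabla J(\theta^2)$ by an add-and-subtract decomposition. Write
\begin{equation*}
\nabla J(\theta) = \int p_\theta(\tau)\, f_\theta(\tau)\, d\tau,\qquad f_\theta(\tau) := \sum_{t=0}^\infty \gamma^t \nabla \log \pi_\theta(a_t \mid s_t)\, Q_{\pi_\theta}(s_t, a_t),
\end{equation*}
where $p_\theta(\tau)$ is the trajectory density under $\pi_\theta$. Only two $\theta$-dependent factors appear in the integral: the trajectory law $p_\theta$ and the integrand $f_\theta$.

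The decomposition
\begin{equation*}
\nabla J(\theta^1) - \nabla J(\theta^2) = \int p_{\theta^1}(\tau)\bigl[f_{\theta^1}(\tau) - f_{\theta^2}(\tau)\bigr] d\tau + \int \bigl[p_{\theta^1}(\tau) - p_{\theta^2}(\tau)\bigr] f_{\theta^2}(\tau)\, d\tau
\end{equation*}
isolates the two sources of perturbation. For the first integral, I would combine (i) the $L_\Theta$-Lipschitz and $B_\Theta$-bounded score from Assumption \ref{assum:regularity}\ref{as:bounded_policy}, (ii) the uniform bound $|Q_{\pi_\theta}| \le U_R/(1-\gamma)$ from \eqref{equ:Q_bndness}, and (iii) an auxiliary Lipschitz bound on $Q_{\pi_\theta}(s,a)$ in $\theta$ obtained by differentiating the Bellman equation for $Q_{\pi_\theta}$ and invoking a geometric contraction in $\gamma$. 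Each summand is controlled termwise and the $\gamma^t$ weights yield a geometric series with sum of order $(1-\gamma)^{-2}$. For the second integral, I would invoke the fundamental theorem of calculus along the segment joining $\theta^1$ and $\theta^2$, together with the log-derivative identity $\nabla_\theta p_\theta(\tau) = p_\theta(\tau)\sum_{t=0}^\infty \nabla \log \pi_\theta(a_t \mid s_t)$.

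The main obstacle is taming that last identity: the infinite sum of score terms is not absolutely bounded, so a naive triangle inequality diverges as $\sum_t B_\Theta$. The resolution is to interchange summations and pair each score $\nabla \log \pi_\theta(a_t \mid s_t)$ only with the $f_{\theta^2}$ contributions whose time index exceeds $t$; using the tower property with the zero-mean identity $\mathbb{E}_{a \sim \pi_\theta}[\nabla \log \pi_\theta(a \mid s)] = 0$ eliminates the anticipative cross terms, leaving a geometrically decaying $\gamma$-weighted tail involving $Q_{\pi_{\theta^2}}$. This renders the sum absolutely convergent with an $(1-\gamma)^{-2}$ prefactor, so the second integral also admits a bound linear in $\|\theta^1 - \theta^2\|$. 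Collecting constants in terms of $B_\Theta$, $L_\Theta$, $U_R$, and $(1-\gamma)^{-1}$ yields the desired Lipschitz constant $L$.
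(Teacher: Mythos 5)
Your proposal is essentially correct, and it supplies substance the paper itself omits: the paper does not prove this lemma at all, deferring entirely to \cite{Zhang_SICON}[Lemma 4.2]. Your two-term decomposition---perturbing the integrand $f_\theta$ under a fixed trajectory law, then perturbing the law against a fixed integrand---is precisely the skeleton of the standard smoothness argument for $J(\theta)$ in that reference and the surrounding policy-gradient literature, and the ingredients you assemble are the right ones: (i) the $L_\Theta$/$B_\Theta$ conditions of Assumption \ref{assum:regularity} together with the uniform bound \eqref{equ:Q_bndness} control the score-difference part; (ii) the auxiliary Lipschitz bound on $Q_{\pi_\theta}$ in $\theta$ does follow from differentiating the Bellman equation and solving the resulting recursion, giving $\sup_{s,a}\|\nabla_\theta Q_{\pi_\theta}(s,a)\|\le \gamma B_\Theta U_R/(1-\gamma)^2$; and (iii) your handling of the measure term is the crux, and you correctly identify both the obstacle (the log-derivative identity yields $\sum_t \nabla\log\pi_\theta(a_t\mid s_t)$, which is not absolutely summable) and its resolution (under $p_{\theta_\lambda}$ the score at time $t'$ has zero conditional mean given the history, so every cross term in which the score index exceeds the $\gamma$-weighted index vanishes, leaving $\sum_{t'}\sum_{t\ge t'}\gamma^t=(1-\gamma)^{-2}$). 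Two points need shoring up for full rigor. First, an infinite trajectory admits no density with respect to a common dominating measure, so the representation $\nabla J(\theta)=\int p_\theta f_\theta\,d\tau$, the fundamental-theorem-of-calculus step, and the summation/expectation interchange should all be executed on horizon-$H$ truncations, with bounds uniform in $H$ (the $\gamma^t$ weights supply the geometric tails) followed by a limiting argument; as written, the cancellation is applied to an expression whose naive absolute value diverges, which is exactly the situation where an unjustified interchange can fail. Second, differentiability of $\theta\mapsto Q_{\pi_\theta}(s,a)$, which you assume when differentiating the Bellman equation, deserves a one-line justification from the trajectory representation of $Q$ plus dominated convergence. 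A minor bookkeeping note: the measure term carries the extra factor $|Q_{\pi_{\theta^2}}|\le U_R/(1-\gamma)$ on top of your $(1-\gamma)^{-2}$ double-sum prefactor, so the final Lipschitz constant is of order $(1-\gamma)^{-3}$ rather than $(1-\gamma)^{-2}$; this is harmless, since the lemma asserts only the existence of some finite $L$.
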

\bluest{This lemma allows us to establish an approximate ascent for the objective sequence $\{J(\theta_k)\}$.}
%
%
%
\begin{lemma}  \label{lem:submart_prop}
Consider the actor parameter sequence defined by Algorithm \ref{alg:AC_generic}. Further let Assumptions \ref{assum:regularity} and \ref{as:bounded_feature_map} be in effect. \newblue{Define the probability space $\left(\Omega,\mathcal{F}, P \right)$. Further, let $\mathcal{F}_k$ be the $\sigma$-algebra generated by the set $\{s_u,a_u,\theta_u\}_{u< k} $, that is the states, actions, and policy parameters until time $k$. Then}, the sequence \bluest{$\{J(\theta_k)\}$} satisfies the inequality
\begin{equation} \label{equ:lem1_result} 
\mathbb{E}[\bluest{J(\theta_{k+1})} \mid \mathcal{F}_k] \geq \bluest{J(\theta_k)}  + \eta_k \|\nabla J(\theta_k)\|^2 - \eta_k C_\nabla C_1 \gamma^{H(k)-1} - \eta_k C_\nabla C_2 T(k)^{-b}- \bluest{L \sigma^2 \eta_k^2}.
\end{equation}
where $C_1$ and $C_2$ come from Theorem \ref{thm:finite_bias}.
\end{lemma}
\begin{proof} See Appendix \ref{prof:lem2}
\end{proof}
From \eqref{equ:lem1_result} (Lemma \ref{lem:submart_prop}), consider taking the total expectation 
\begin{equation}
\mathbb{E}[J(\theta_{k+1}) ] \geq \mathbb{E}[J(\theta_k)]  + \eta_k \mathbb{E}[ \|\nabla J(\theta_k)\|^2] - \eta_k C_\nabla C_1 \gamma^{H(k)-1} - \eta_k C_\nabla C_2 T(k)^{-b}\bluest{- L \sigma^2 \eta_k^2}.
\end{equation}
This almost describes an ascent of \bluest{$J(\theta_k)$}. Because the norm of the gradient is non-negative, if the latter three terms were removed, an argument could be constructed to show that in expectation, the gradient converges to zero. Unfortunately, both the error of the finite horizon estimate and the critic error complicate the picture. However, we know that the error goes to zero in expectation as the number of critic steps and the horizon length increase. Thus, we leverage this property to derive the sample complexity of actor-critic (Algorithm \ref{alg:AC_generic}).

We now present our main result, which is the convergence rate of actor-critic method when the algorithm remains agnostic to the particular \blue{choice} of critic scheme. We characterize the rate of convergence by the smallest number $K_\epsilon$  of actor updates $k$ required to attain a value function gradient smaller \blue{than} $\epsilon$, i.e. for $\epsilon > 0$, 
\begin{equation} \label{equ:Keps_def}
K_\epsilon = \min\{k\,:\,\inf_{0\leq m \leq k} \|\nabla J(\theta_m)\|^2 < \epsilon\}.
\end{equation}
\begin{theorem} \label{thm:general_rate}
Suppose the \blue{actor} step-size satisfies $\eta_k = k^{-a}$ for $a >0$ and  the critic update satisfies \blue{Proposition} \ref{prop:critic_bound}. \blue{Further let $T_C(k) = k + 1\cdot \mathbf{1}(b = 1)$, and $H(k) = k$.}
%
Then the actor sequence defined by Algorithm \ref{alg:AC_generic} satisfies
\begin{equation}\label{eq:thm1_main}
%
K_\epsilon \leq \mathcal{O}\left( \epsilon^{-1/\ell} \right) \; , \text { where } \ell = \min\{a,1-a, b\}
\end{equation}
Minimizing over $a$ yields actor step-size $\eta_k = k^{-1/2}$. Moreover, depending on the rate $b$ of attenuation of the critic bias [cf. \eqref{equ:critic_bound}], the resulting sample complexity is:
\begin{align}\label{eq:thm1_main2}
K_\epsilon \leq
  \begin{cases} 
   \mathcal{O}\left( \epsilon^{-1/b}\right) & \text{if } b\in (0,1/2)\\
    \mathcal{O}\left( \epsilon^{-2}\right).       & \text{if }b\in (1/2, 1]
  \end{cases}
\end{align}
%
%
\end{theorem}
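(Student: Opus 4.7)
The plan is to start from the one-step ascent inequality established in Lemma~\ref{lem:submart_prop} and convert it into an aggregate bound by summing over $k$. Taking total expectations in \eqref{equ:lem1_result} and substituting Assumption~\ref{as:critic_bound}, I would write
\begin{equation*}
\E[W_{k+1}]\;\geq\;\E[W_{k}]\;+\;\eta_k\,\E[\|\nabla J(\theta_k)\|^2]\;-\;C\,L_1\,\eta_k\,T_C(k)^{-b}.
\end{equation*}
Rearranging and telescoping from $k=1$ to $K$ turns the left-hand-side into $\E[W_{K+1}]-\E[W_1]$, which is $O(1)$ because $J(\theta)$ is uniformly bounded by Assumption~\ref{assum:regularity} and the tail $L\sigma^{2}\sum_{j}\eta_j^{2}$ is controlled (or, equivalently, one can bypass $W_k$ entirely by keeping the explicit $L\sigma^{2}\eta_k^{2}$ descent remainder). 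This yields
\begin{equation*}
\sum_{k=1}^{K}\eta_k\,\E[\|\nabla J(\theta_k)\|^2]\;\leq\;O(1)\;+\;C\,L_1\sum_{k=1}^{K}\eta_k\,T_C(k)^{-b}\;+\;L\sigma^{2}\sum_{k=1}^{K}\eta_k^{2}.
\end{equation*}

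Next I would lower-bound the left-hand side by the infimum, $\sum_{k=1}^{K}\eta_k\,\E[\|\nabla J(\theta_k)\|^2]\geq\bigl(\inf_{m\leq K}\E[\|\nabla J(\theta_m)\|^2]\bigr)\sum_{k=1}^{K}\eta_k$, and estimate each power-sum with $\eta_k=k^{-a}$ and $T_C(k)\asymp k$: for $a\in(0,1)$ we have $\sum_{k=1}^{K}k^{-a}=\Theta(K^{1-a})$, while $\sum_{k=1}^{K}k^{-(a+b)}=O(K^{\max\{0,1-a-b\}})$ and $\sum_{k=1}^{K}k^{-2a}=O(K^{\max\{0,1-2a\}})$. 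Dividing through gives the key rate
\begin{equation*}
\inf_{m\leq K}\E[\|\nabla J(\theta_m)\|^2]\;\leq\;O(K^{a-1})\;+\;O(K^{-b})\;+\;O(K^{-a})\;=\;O\bigl(K^{-\min\{a,\,1-a,\,b\}}\bigr),
\end{equation*}
which is exactly \eqref{eq:thm1_main} with $\ell=\min\{a,1-a,b\}$. Inverting $K^{-\ell}\leq\epsilon$ produces $K_\epsilon\leq O(\epsilon^{-1/\ell})$.

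Finally, I would optimize $a$. The three competing exponents $a,\,1-a,\,b$ are simultaneously maximized (in the worst case) by balancing $a=1-a$, i.e.\ $a=1/2$, at which point $\ell=\min\{1/2,b\}$. Substitution yields the stated two regimes: if $b>1/2$ the $K^{-1/2}$ term from $\sum\eta_k^{2}$ dominates and $K_\epsilon=O(\epsilon^{-2})$; if $b\leq 1/2$ the critic-bias term $K^{-b}$ dominates and $K_\epsilon=O(\epsilon^{-1/b})$. The boundary case $b=1$ is the only one requiring $T_C(k)=k+1$ rather than $T_C(k)=k$, so that the sum $\sum_k k^{-a}(k+1)^{-1}$ remains summable and contributes only an $O(1)$ term, not changing the rate.

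The main obstacle is not any single inequality but the careful bookkeeping of the three summable terms---telescoping remainder, critic bias, and stochastic variance---and verifying that one cannot do better than $a=1/2$ because any attempt to shrink the variance contribution by taking $a>1/2$ simultaneously shrinks the partial sum $\sum\eta_k=\Theta(K^{1-a})$ used to extract the infimum, producing the $1-a$ exponent that cancels the gain. Checking that Assumption~\ref{as:bounded_variance} genuinely furnishes the $L\sigma^{2}\eta_k^{2}$ term under the actor update \eqref{eq:actor_update} (via the Lipschitz gradient from Lemma~\ref{lemma:lip_policy_grad}) and that the truncated definition of $W_k$ does not introduce any hidden divergence when $a\leq 1/2$ are the subtleties that require the most caution.
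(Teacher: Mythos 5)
Your proposal is correct and reaches the same exponent $\ell=\min\{a,1-a,b\}$ as the paper, but it telescopes differently. The paper divides the per-iterate inequality \eqref{eq:theorem1_proof_suboptimality} by $\eta_j$ \emph{before} summing, handles $\sum_j \eta_j^{-1}\left(\E[U_j]-\E[U_{j+1}]\right)$ by summation by parts with the uniform bound $U_j\le 2U_R/(1-\gamma)=C_4$ so the telescoped terms collapse to $C_4/\eta_k=C_4k^a$, and then takes a Ces\`aro average of the \emph{unweighted} gradient norms; in that normalization the variance contribution enters as $\sum_j\eta_j=\Theta(K^{1-a})$ and the critic bias as $\sum_j T_C(j)^{-b}$. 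You instead keep the stepsize weights, telescope $W_k$ (or $U_k$) directly, and extract $\inf_{m\le K}\E[\|\nabla J(\theta_m)\|^2]$ against $\sum_{k\le K}\eta_k=\Theta(K^{1-a})$. Your route buys two things: it avoids the summation-by-parts bookkeeping entirely, and—as you rightly flag—it lets you bypass the random variable $W_k$ of \eqref{eqn:Wk}, which is in fact ill-defined (equal to $-\infty$) whenever $a\le 1/2$ because $\sum_{j\ge k}\eta_j^2$ diverges there; the paper uses $W_k$ anyway and only escapes because the infinite tails cancel inside the proof, so your explicit-remainder variant is the more careful treatment at precisely the step-size $a=1/2$ the theorem ends up selecting.

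One small slip to note: your estimate $\sum_{k\le K}k^{-2a}=O(K^{\max\{0,1-2a\}})$ fails at the boundary $a=1/2$, where the sum is $\Theta(\log K)$. In your weighted normalization the variance term then yields $\inf_{m\le K}\E[\|\nabla J(\theta_m)\|^2]\le O(K^{-1/2}\log K)$, i.e.\ $K_\epsilon\le O\left(\epsilon^{-2}\log^2(1/\epsilon)\right)$ rather than the clean $O(\epsilon^{-2})$ claimed in \eqref{eq:thm1_main2}. The paper's divide-by-$\eta_j$ normalization sidesteps this because its variance sum is $\sum_j\eta_j$, not $\sum_j\eta_j^2$, so at $a=1/2$ it contributes $K^{-1/2}$ with no logarithm (the only log in the paper's proof, from the harmonic critic sum in the $b=1$ case, enters as $\log(k+1)/k=o(k^{-1/2})$ and is dominated). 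Relatedly, in your weighted scheme the choice $T_C(k)=k+1$ for $b=1$ is cosmetic—$\sum_k k^{-a}\,T_C(k)^{-1}$ converges either way—whereas it is the paper's \emph{unweighted} critic sum $\sum_j T_C(j)^{-1}$ that genuinely needs the logarithmic bookkeeping; your justification for the $+1$ imports the paper's normalization into yours, where it is not actually needed.
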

\begin{proof} See Appendix \ref{proof:thm1}
\end{proof}
The analysis of Lemma \ref{lem:submart_prop} and Theorem \ref{thm:general_rate} do not make any assumptions on the size of the state action space. Additionally, the result describes the number of actor updates required. The number of critic updates required is simply the $K_\epsilon^\textrm{th}$ triangular number, that is $K_\epsilon + 1 \choose 2$. These results connect actor-critic algorithms with the behavior of stochastic gradient method for finding the root of a non-convex objective. Under additional conditions, actor-critic with TD updates for the critic step attains a $O( \epsilon^{-2} ) $ rate. However, under milder conditions on the state and action spaces but more stringent smoothness conditions on the reward function, using GTD updates for the critic yields $O(\epsilon^{-3})$ rates. These results are formally derived in the following subsections.
 \blue{We further note that contemporaneously of beginning this work, several refined analyses of TD and GTD have been developed \citep{dalal2018finite,DalalST20} that focus on concentration bounds (``lock-in probability"), a weaker metric of stability than convergence in mean, i.e., convergence in Lebesgue integral implies convergence in measure. In this work, we focus on \emph{global convergence} to stationarity in terms of the expected gradient norm of the value function, and thus employ policy evaluation rates that are compatible with this goal, i.e., rates in the form of attenuation of mean square error. We defer the study of lock-in probabilities to future work.}

\blue{
\begin{remark}
We note that it may be possible to establish convergence in terms of asymptotic covariance or the Hessian around a stationary point, as in \citep{stsy20180019}, and thus obtain a sharper characterization of the limit points of actor-critic. However, doing so pre-supposes that the algorithm settle to a neighborhood of a local extrema, and would require a Hessian parameterization that is only \emph{locally} valid. Hence sharper global convergence characterizations, to our knowledge, are beyond reach. In this work, our intention is to establish the \emph{global} sample complexity of actor-critic type algorithms, and leave strengthening the local rates using, e.g., techniques developed in \citep{stsy20180019}, to future work.
\end{remark}
 }
 
%

\section{Rates of Gradient and Accelerated Actor-Critic} \label{sec:GTD}
In this section, we show how Algorithm \ref{alg:AC_generic} can be applied to derive the rate of actor-critic methods using Gradient Temporal Difference (GTD) as the critic update. Thus, we proceed with deriving GTD-style updates through links to compositional stochastic programming \citep{wang2017stochastic} which is also the perspective we adopted to derive rates in the previous section. For simplicity in notation, we let $Q$ stand for $Q_{\pi_\theta}$.  Begin by recalling that any critic method seeks a fixed point of the Bellman evaluation operator:
\begin{equation}
(T^{\pi_\theta}Q)(s,a) \triangleq r(s,a) + \gamma \E_{s'\in \mathcal{S},\blue{ a' \sim \pi_{\theta}(s')}}[Q(s', a')~|~s,a]
\end{equation} 
Since we focus on parameterizations of the $Q$ function by parameter vectors $\xi\in\mathbb{R}^d$ with some fixed feature map $\varphi$ which is learned \emph{a priori}, the Bellman operator simplifies 
\begin{equation}
 T^{\pi_\theta} Q_\xi (s,a) = \mathbb{E}_{s' \blue{\in \mathcal{S}},a'\sim\pi_\theta(s')} [r(s,a) +\gamma \xi^\top \varphi (s',a') | s,a ]
\label{newBel}
\end{equation}
The solution of the Bellman equation is its fixed point: $T^\pi Q(s,a) = Q(s,a)$ for all $s\in \mathcal{S}, a \in \mathcal{A}$. Thus, we seek $Q$ functions that minimize the (projected) Bellman error
\begin{equation} \label{equ:def_F}
\min_{\xi \in \Xi}  \| \Pi T^{\pi_\theta} Q_\xi - Q_\xi \|_\mu^2  =: F(\xi).
\end{equation}
where $\Xi \subseteq \mathbb{R}^p$ is a closed and convex feasible set. The Bellman error quantifies distance from the fixed point for a given $Q_\xi$. Here the projection and $\mu$-norm are respectively defined as 
\begin{equation}
    \Pi \hat Q = \arg \min_{ \mathrm{f} \in \mathcal{F} } \| \hat Q -\mathrm{f} \|_\mu \; , \qquad \|Q \|_\mu^2 = \int Q^2(s,a)\mu(\mathrm{d}s,\mathrm{d}a),
    \label{projection}
\end{equation}
%
%
%
This parameterization of $Q$ implies that we restrict the feasible set -- which is in general $B(\mathcal{S},\mathcal{A})$, the space of bounded continuous functions whose domain is $\mathcal{S}\times \mathcal{A}$ -- to be $\F = \{Q_\xi : \xi \in \Xi \subset \R^d\}$ (as in \citep{maei2010toward}). Without this parameterization, one would require searching over $B(\mathcal{S},\mathcal{A})$, whose complexity scales with the dimension of the state and action spaces \citep{bellman57a}, which is costly when dimensions are large, and downright impossible for continuous spaces \citep{powell2007approximate}.

Under certain mild conditions drawing tools from functional analysis, we can define a projection over a class of functions such that $\Pi\hat Q = \hat Q$. For example, Radial-Basis-Function (RBF) networks have been shown to be capable of approximating arbitrarily well functions in $L^p(\mathbb{R}^r)$ \citep[Theorem 1]{park1991universal}. Further, neural networks with one hidden layer and sigmoidal activation functions are known to approximate arbitrarily well continuous functions on the unit cube \citep[Theorem 1]{cybenko1989approximation}. 

By the definition of the $\mu$-norm, we can write $F$ [cf. \eqref{equ:def_F}] as an expectation
\begin{equation}
F(\xi) = \mathbb{E} [( T^{\pi_\theta} Q_\xi - Q_{\xi})^2] .
\label{error2}
\end{equation}
As such, we replace the Bellman operator in (\ref{error2}) with (\ref{newBel}) to obtain
\begin{equation}
F(\xi) = \mathbb{E}_{s,a\sim \pi_\theta(s)}\{ ( \mathbb{E}_{s',a'\sim \pi_\theta(s')} [r(s,a) +\gamma \xi^\top \varphi (s',a') | s,a\sim \pi_\theta(s) ] - \xi^\top \varphi(s,a))^2\}.
\label{Jcost}
\end{equation}
Pulling the last term into the inner expectation, $F(\xi)$ can be written as the function composition $F(\xi) = (f \circ g)(x) = f(g(x))$, where $f: \mathbb{R} \to \mathbb{R}$ and $g: \mathbb{R}^p \to \mathbb{R}$ take the form of expected values
\begin{equation} \label{equ:f_g_def_orig}
f(y) = \E_{(s,a)}[f_{(s,a)}(y)] \;,\qquad  g(\xi) = \E_{(s',a')}[g_{(s',a')}(\xi)~|~s,a\sim\pi_\theta(s)],
\end{equation}
where 
\begin{equation} \label{equ:f_g_def}
f_{(s,a)}(y) = y^2 \; , \qquad g_{(s',a')}(\xi) = r(s,a) + \gamma \xi^\top \varphi(s',a') - \xi^\top \varphi(s,a).
\end{equation}
%
%
Because $F(\xi)$ can be written as a nested expectations of convex functions, we can use Stochastic Compositional Gradient Descent (SCGD) for the critic update \citep{wang2017stochastic}. This requires the computation of the sample gradients for both $f$ and $g$ in \eqref{equ:f_g_def_orig}
\begin{equation} \label{equ:nabla_f_g}
\nabla f_{(s,a)}(y) =2 y \; , \qquad \nabla g_{(s',a')}(\xi) = \gamma  \varphi(s',a') -  \varphi(s,a) .
\end{equation}
The specification of SCGD to the Bellman evaluation error \eqref{Jcost} yields the GTD updates \eqref{eq:gtd_update} defined in Section \ref{sec:alg} -- see \citep{sutton2008convergent} for further details. 
We now turn to establishing the convergence rate in expectation for Algorithm \ref{alg:AC_generic} (substituting Algorithm \ref{alg:SCGD} for the $\textbf{Critic}(k)$) step using Theorem \ref{thm:general_rate}. Doing so requires the conditions of Theorem 3 from \cite{wang2017stochastic} to be satisfied, which we subsequently state. 
\begin{assumption} \label{as:SCGD}
~\\
\vspace{-4mm}
\begin{enumerate}[label=(\roman*)] 
\item The outer function $f$ is continuously differentiable, the inner function $g$ is continuous, the critic parameter feasible set $\Xi$ is closed and convex, and there exists at least one optimal solution to problem \eqref{equ:def_F}, namely $\xi^*\in \Xi$
\item The sample first order information is unbiased. That is, 
$$\E[g_{(s_0',a_0')}(\xi) ~|~s_0,a_0\sim\pi_\theta(s_0)] = g(\xi)$$
\item The function $\E[g(\xi)]$ [cf. \eqref{equ:f_g_def}] is $C_g$-Lipshitz continuous and the samples $g(\xi)$ and $\nabla g(\xi)$ have bounded second moments
$$\E[\|\nabla g_{(s_0',a_0')}(\xi)\|^2 ~|~s_0,a_0\sim\pi_\theta(s_0)] \leq C_g, \; \qquad \E[\|g_{(s_0',a_0')}(\xi) - g(\xi)\|^2] \leq V_g$$
\item The $f_{(s,a)}(y)$  has a Lipschitz continuous gradient such that 
$$\E[\|\nabla f_{(s_0,a_0)}(y)\|^2] \leq C_f \; \qquad \|\nabla f_{(s_0,a_0)}(y) - f_{(s_0,a_0)}(\bar y)\| \leq L_f\|y - \bar y\| $$
for all $y, \bar y \in \mathbb{R}$
\item The projected Bellman error is strongly convex with respect to the critic parameter $\xi$ in the sense that there exists a $\lambda$ such that
$$ \nabla^2 F(\xi)  \succeq \lambda I $$
\end{enumerate}
\end{assumption}

The first part of Assumption \ref{as:SCGD}(i) is trivially satisfied by the forms of $f$ and $g$  in \eqref{equ:f_g_def}. Assumption \ref{as:SCGD}(ii) requires that the state-action pairs used to update the critic parameter to be independently and identically distributed (i.i.d.), which is a common assumption unless one focuses on performance along a \emph{single trajectory}. Doing so requires tools from dynamical systems under appropriate mixing conditions on the Markov transition density \citep{borkar2009stochastic,antos2008fitted}, which we obviate here for simplicity and to clarify insights. We note that the sample complexity of policy evaluation along a trajectory has been established by \cite{bhandari2018finite}, but remains open for policy learning in continuous spaces. Moreover, i.i.d. sampling yields unbiasedness of certain gradient estimators and second-moment boundedness which are typical  for stochastic optimization \citep{bottou1998online}. We note that these conditions come directly from \cite{wang2017stochastic} -- here we translate them to the reinforcement learning context.

\begin{algorithm}[t]
\caption{Critic: Gradient Temporal Difference (GTD)}
\begin{algorithmic}[1]
\Require 
\Statex $s_0\in \mathbb{R}^n$, $\theta$, $\xi_0$, stepsizes $\{\alpha_t\} \subset \mathbb{R}^+, \{\beta_t\} \subset (0,1]$ which satisfy $\frac{\alpha_t -1}{\beta_t} \to 0$, Horizon $T_C$
\For{$t = 0, \dots, T_C - 1$}
\State Sample $s_t$ from the ergodic distribution and draw action $a_t \sim \pi^{\theta}$
\State Observe next state $s'_t \sim \textbf{P} (s_t, a_t, s'_t)$ and observe reward $r_t$
\State $\delta_{t} = r_{t} +  \xi_t^\top (\gamma\varphi(s'_{t}, a'_t) - \varphi(s_t,a_t))$
\State $z_{t + 1} = (1- \beta_t)z_t + \beta_t \delta_t$
\State  Update Critic: 
$$\xi_{t+1} = \xi_t - 2\alpha_t z_{t+1}[\gamma\varphi(s'_{t}, a'_{t}) - \varphi(s_t,a_t) ]$$
\EndFor
\end{algorithmic}
\label{alg:SCGD}
\end{algorithm}
We further require $F(\xi)$ to be strongly convex, so that \cite{wang2017stochastic}[Theorem 3 and Theorem 7] holds.
\blue{ Consider the Hessian 
	\begin{equation} \label{equ:Hessian}
	\nabla^2 F(\xi) = \mathbb{E}_{s,a} \left[\mathbb{E}_{s',a'} \left[\gamma \varphi(s',a') - \varphi(s,a)|s,a\right]^\top\mathbb{E}_{s',a'} \left[\gamma \varphi(s',a') - \varphi(s,a)|s,a\right]\right].
	\end{equation}
Due to its structure, and the i.i.d. assumption, the Hessian $\nabla^2 F(\xi)$ is known to be positive definite \cite{bertsekas1995dynamic, dalal2018finite}.
}
%
%
%
We can now combine the convergence result (Theorem 3) from \cite{wang2017stochastic} with Theorem \ref{thm:general_rate} to establish the rate of actor-critic with GTD updates for the critic, through connecting GTD and SCGD. We summarize the resulting method as Algorithm \ref{alg:SCGD}, which we call Gradient Actor-Critic.
%
\begin{algorithm}[t]
\caption{Critic: Accelerated Gradient Temporal Difference (AGTD)}
\begin{algorithmic}[1]
\Require 
\Statex $s_0\in \mathbb{R}^n$, $\theta$, $\xi_0$, stepsizes $\{\alpha_t\} \subset \mathbb{R}^+, \{\beta_t\} \subset (0,1]$ which satisfy $\frac{\alpha_t -1}{\beta_t} \to 0$, Horizion $T_C$
\State Initialize $y_0 \leftarrow 0$
\For{$t = 0, \dots, T_C(k) - 1$}
\State Sample $s_t$ from the ergodic distribution and draw action $a_t \sim \pi^{\theta}$
\State Observe next state $s'_t \sim \textbf{P} (s_t, a_t, s'_t)$ and observe reward $r_t$
\State  Update Critic: 
$$\xi_{t+1} = \xi_t - 2\alpha_t (\gamma \varphi(s',a') - \varphi(s,a))y_t$$
\State Update auxiliary Critic parameters $y_t$ and $z_t$
\begin{align*}
z_{t+1} &= -\left(\frac{1}{\beta_t} - 1\right)\xi_t + \frac{1}{\beta_t}\xi_{t+1} \\
y_{t+1}& = (1-\beta_t)y_t + \beta_t (r(s,a) + z_{t+1}^\top\left(\gamma\varphi(s',a') -\varphi(s,a) \right)
\end{align*}
\EndFor
\end{algorithmic}
\label{alg:A-SCGD}
\end{algorithm}
\begin{corollary} \label{cor:GTD}
Consider the actor parameter sequence defined by Algorithm \ref{alg:SCGD}. If the stepsize $\eta_k = k^{-1/2}$ and the critic stepsizes are $\alpha_t = 1/t\sigma$ and $\beta_t = 1/t^{2/3}$, then we have the following bound on $K_\epsilon$ defined in \eqref{equ:Keps_def}:
\begin{equation}
K_\epsilon \leq \mathcal{O}\left( \epsilon^{-3} \right). 
\end{equation}
\end{corollary}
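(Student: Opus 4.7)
The plan is to obtain the corollary as a direct instantiation of Theorem \ref{thm:general_rate}, once the critic convergence rate $b$ in Assumption \ref{as:critic_bound} is pinned down for the SCGD--style GTD updates of Algorithm \ref{alg:SCGD}. The overall strategy has two pieces: first, import a convergence-in-mean-square bound for SCGD from \cite{wang2017stochastic} (specialized to the Bellman-error composition $F(\xi) = f(g(\xi))$ defined in \eqref{equ:f_g_def}), which under the stepsizes $\alpha_t = 1/(t\sigma)$ and $\beta_t = 1/t^{2/3}$ yields a critic rate exponent $b = 1/3$; second, plug this value into the dichotomy in \eqref{eq:thm1_main2} together with the actor stepsize $\eta_k = k^{-1/2}$ (so $a = 1/2$) to obtain the advertised $\mathcal{O}(\epsilon^{-3})$ sample complexity.

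For the critic piece, I would first verify that Assumption \ref{as:SCGD} is a valid translation of the hypotheses of \cite{wang2017stochastic}[Theorem 7] to the present setting: the outer $f(y) = y^2$ is smooth and convex, the inner $g(\xi) = r + \gamma \xi^\top \varphi(s',a') - \xi^\top \varphi(s,a)$ is affine in $\xi$, i.i.d.\ sampling from the ergodic distribution (Assumption \ref{as:IID}) provides the unbiasedness in Assumption \ref{as:SCGD}(ii), Assumption \ref{as:bounded_feature_map} delivers the bounded-second-moment conditions on $\nabla g$ and $g - \E g$, and the strong convexity of $F$ assumed in Assumption \ref{as:SCGD}(v) supplies the $\lambda$-modulus. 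Under these conditions, the strongly convex branch of \cite{wang2017stochastic}[Theorem 7] with $\alpha_t \sim t^{-1}$, $\beta_t \sim t^{-2/3}$ gives $\E[\|\xi_t - \xi^*\|^2] = \mathcal{O}(t^{-2/3})$. A single application of Jensen's inequality then yields
\begin{equation}
\E[\|\xi_t - \xi^*\|] \leq \bigl(\E[\|\xi_t - \xi^*\|^2]\bigr)^{1/2} \leq L_1 t^{-1/3},
\end{equation}
so Assumption \ref{as:critic_bound} holds with $b = 1/3$.

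For the actor piece, with $\eta_k = k^{-1/2}$ we have $a = 1/2$, hence $\min\{a, 1-a\} = 1/2$. Because $b = 1/3 < 1/2$, we land in the first branch of \eqref{eq:thm1_main2} (equivalently, $\ell = \min\{a, 1-a, b\} = b = 1/3$ in \eqref{eq:thm1_main}). Theorem \ref{thm:general_rate} then gives
\begin{equation}
K_\epsilon \leq \mathcal{O}\bigl(\epsilon^{-1/b}\bigr) = \mathcal{O}(\epsilon^{-3}),
\end{equation}
as claimed, with the nested critic loop contributing $T_C(k) = k$ inner iterations at outer iterate $k$.

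The only non-mechanical step is the first one: matching the abstract SCGD hypotheses to the Bellman-error compositional form and confirming that the mean-square rate $\mathcal{O}(t^{-2/3})$ of \cite{wang2017stochastic}[Theorem 7] survives the Jensen passage to an $L^1$ bound in the form required by Assumption \ref{as:critic_bound}. Once that translation is justified, the remainder is a one-line substitution into Theorem \ref{thm:general_rate}.
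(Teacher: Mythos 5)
Your proposal matches the paper's proof essentially verbatim: invoke the strongly convex SCGD rate $\E[\|\xi_t - \xi_*\|^2] \leq \mathcal{O}(t^{-2/3})$ from \cite{wang2017stochastic} under the stated stepsizes, pass to an $L^1$ bound via Jensen's inequality to obtain $b = 1/3$ in Assumption \ref{as:critic_bound}, and substitute into the $b < 1/2$ branch of Theorem \ref{thm:general_rate} to conclude $K_\epsilon \leq \mathcal{O}(\epsilon^{-3})$, with your extra verification that Assumption \ref{as:SCGD} instantiates the SCGD hypotheses simply making explicit what the paper discusses in the text preceding the corollary. The one slip is the citation label: the basic SCGD strongly convex result you are actually using (stepsizes $\alpha_t \sim t^{-1}$, $\beta_t \sim t^{-2/3}$, rate $t^{-2/3}$) is \cite{wang2017stochastic}[Theorem 3], whereas Theorem 7 is the accelerated variant with $\beta_t = t^{-4/5}$ and rate $t^{-4/5}$ that the paper reserves for Corollary \ref{cor:GTD2}; since the rate and stepsizes you quote are those of Theorem 3, the argument itself is unaffected.
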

\begin{proof} Here we invoke \citep[Theorem 3]{wang2017stochastic} which characterizes the rate of convergence for the critic parameter 
\begin{equation}
\E[\|\xi_k - \xi_*\|^2] \leq \mathcal{O}\left(k^{-2/3}\right).
\end{equation}
Applying Jensen's inequality, we have
\begin{equation}
\E[\|\xi_k - \xi_*\|]^2  \leq \E[\|\xi_k - \xi_*\|^2] \leq \mathcal{O}\left( k^{-2/3}\right),
\end{equation}
Taking the square root gives us 
\begin{equation}
\E[\|\xi_k - \xi_*\|]  \leq\mathcal{O}\left( k^{-1/3}\right).
\end{equation}
Therefore, \blue{$b = 1/3$} \blue{(c.f. Proposition \ref{prop:critic_bound})} in Theorem \ref{thm:general_rate}, which determines the $\mathcal{O}\left(\epsilon^{-3}\right)$ rate on $K_\epsilon$ in the preceding expression.
\end{proof}
%

Unsurprisingly, with additional smoothness assumptions, it is possible to obtain faster convergence through accelerated variants of GTD. The corresponding actor-critic method with Accelerated GTD updates is given by substituting Algorithm \ref{alg:A-SCGD} for $\textbf{Critic}(k)$ in Algorithm \ref{alg:AC_generic}, which we call Accelerated Actor-Critic.  The validity of accelerated rates, aside from Assumption \ref{as:SCGD}, requires imposing that the inner expectation has Lipschitz gradients and that sample gradients have boundedness properties which are formally stated below.
%
\begin{samepage}
\begin{assumption} \label{as:A-SCGD}
~\\
\begin{enumerate}[label=(\roman*)] 
\item There exists a constant scalar $L_g > 0$ such that
$$\|\nabla \E_{s',a'\sim \pi_\theta(s')}[g(\xi_1)] - \nabla \E_{s',a'\sim \pi_\theta(s')}[g(\xi_2)]\| \leq L_g \|\xi_1-\xi_2\|, \; \forall \xi_1,\xi_2\in \Xi$$
\item The sample gradients satisfy with probability 1 that
$$\E\left[ \|\nabla g(\xi)\|^4 ~|~s_0,a_0\right] \leq C^2_g, \; \forall \xi \in \Xi \; , \qquad
\E\left[ \|\nabla f(y)\|^4 \right] \leq C^2_f, \; \forall y \in \mathbb{R}^d$$
\end{enumerate}
\end{assumption}
\end{samepage}
With this additional smoothness assumption, sample complexity is reduced, as we state in the following corollary.
\begin{corollary}
\label{cor:GTD2}
Consider the actor parameter sequence defined by Algorithm \ref{alg:A-SCGD}. If the stepsize $\eta_k = k^{-1/2}$ and the critic stepsizes are $\alpha_t = 1/t\sigma$ and $\beta_t = 1/t^{4/5}$, then we have the following bound on $K_\epsilon$ defined in \eqref{equ:Keps_def}:
\begin{equation}
K_\epsilon \leq \mathcal{O}\left( \epsilon^{-5/2} \right). 
\end{equation}
\end{corollary}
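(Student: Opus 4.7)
The plan is to mirror the proof of Corollary \ref{cor:GTD} exactly, substituting the accelerated SCGD rate from \cite{wang2017stochastic} in place of the vanilla SCGD rate. First, I would invoke the acceleration result of \cite{wang2017stochastic}[Theorem 7], which under the combined hypotheses of Assumptions \ref{as:SCGD} and \ref{as:A-SCGD} together with the stepsize choices $\alpha_t = 1/(t\sigma)$ and $\beta_t = t^{-4/5}$ establishes that the squared critic error decays as
\begin{equation}
\E[\|\xi_t - \xi_*\|^2] \leq \mathcal{O}\bigl(t^{-4/5}\bigr).
\end{equation}
Before invoking this, I would briefly verify that the compositional formulation of the projected Bellman error in \eqref{equ:f_g_def} satisfies Assumption \ref{as:A-SCGD}: the inner-gradient Lipschitz condition is immediate because $\nabla g_{(s',a')}(\xi) = \gamma\varphi(s',a') - \varphi(s,a)$ is constant in $\xi$, while the fourth-moment bounds on $\nabla g$ and $\nabla f$ follow from boundedness of the feature map (Assumption \ref{as:bounded_feature_map}), boundedness of the reward (Assumption \ref{assum:regularity}\ref{as:bounded_reward}), and boundedness of the feasible set $\Xi$.

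Next, I would convert the second-moment bound to a first-moment bound via Jensen's inequality applied to the concave square-root map:
\begin{equation}
\E[\|\xi_t - \xi_*\|] \leq \sqrt{\E[\|\xi_t - \xi_*\|^2]} \leq \mathcal{O}\bigl(t^{-2/5}\bigr).
\end{equation}
This identifies the critic-bias exponent in Assumption \ref{as:critic_bound} as $b = 2/5$.

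Finally, since $b = 2/5 \in (0, 1/2)$, the first branch of \eqref{eq:thm1_main2} in Theorem \ref{thm:general_rate} applies with the chosen actor stepsize $\eta_k = k^{-1/2}$, giving
\begin{equation}
K_\epsilon \leq \mathcal{O}\bigl(\epsilon^{-1/b}\bigr) = \mathcal{O}\bigl(\epsilon^{-5/2}\bigr),
\end{equation}
as claimed. The main obstacle, which is largely bookkeeping rather than conceptual, lies in checking that the reinforcement-learning instantiation of the compositional problem genuinely satisfies the hypotheses of \cite{wang2017stochastic}[Theorem 7]; once this translation is in place, the sample complexity follows mechanically from Theorem \ref{thm:general_rate}, exactly as in Corollary \ref{cor:GTD}, with the faster critic rate $k^{-2/5}$ in lieu of $k^{-1/3}$.
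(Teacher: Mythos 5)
Your proposal is correct and follows exactly the paper's own route: the paper proves Corollary \ref{cor:GTD2} by declaring it identical to Corollary \ref{cor:GTD} with \cite{wang2017stochastic}[Theorem 7] in place of Theorem 3, which is precisely your chain of the $\mathcal{O}(t^{-4/5})$ squared-error rate, Jensen's inequality to obtain $b = 2/5$, and the first branch of \eqref{eq:thm1_main2}. Your added verification that the compositional Bellman-error formulation satisfies Assumption \ref{as:A-SCGD} is a small bonus the paper handles only in the surrounding discussion, not a departure in method.
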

\begin{proof} The proof is identical to the proof of Corollary \ref{cor:GTD} while invoking Theorem 7 from \cite{wang2017stochastic}.
\end{proof}
Corollary \ref{cor:GTD2} establishes a $\mathcal{O}(\epsilon^{-5/2})$ sample complexity of actor-critic when accelerated GTD steps are used for the critic update. This is the lowest complexity/fastest rate relative to all others analyzed in this work for continuous spaces. However, this fast rate requires the most stringent smoothness conditions. In the following section, we shift to the case where the critic is updated using vanilla TD(0) updates \eqref{eq:td_update}, which is the original form of actor-critic proposed by \cite{konda1999actor}.

\section{Sample Complexity of Classic Actor-Critic} \label{sec:TD0}
\begin{algorithm}[t]
\caption{Critic: Classical Temporal Difference (TD(0))}
\begin{algorithmic}[1]
\Require 
\Statex $s_0\in \mathbb{R}^n$, $\theta$, $\xi_0$, stepsizes $\{\alpha_t\}$, Horizon $T_C$
\State Initialize $z_0 \leftarrow 0$
\For{$t = 0, \dots, T_C - 1$}
\State Sample $s_t$ from the ergodic distribution and draw action $a_t \sim \pi^{\theta_k}$
\State Observe next state $s'_t \sim \textbf{P} (s_t, a_t, s'_t)$ and observe reward $r_t$
\State  $\delta_{t} = r_{t} +   \xi^\top_t  \left(\gamma\varphi(s_t', a_t') - \varphi(s_t,a_t)\right) $
\State  Update Critic (Q function estimate): 
$$\xi_{t+1} = \xi_t + \alpha_t\delta_t\varphi(s_t,a_t)$$
\EndFor
\end{algorithmic}
\label{alg:TD0}
\end{algorithm}

In this section, we derive convergence rates for actor-critic when the critic is updated using TD(0) as in \eqref{eq:td_update} for two different canonical settings: the case where the state space action is continuous (Sec. \ref{sec:td_continuous}) and when it is finite (Sec. \ref{sec:td_finite}). Both use TD(0) with linear function approximation in its unaltered form \citep{sutton1988learning}. We substitute Algorithm \ref{alg:TD0} for the $\textbf{Critic}(k)$ step in Algorithm \ref{alg:AC_generic}, which is the classical form of actor-critic given by \cite{konda1999actor,konda2000actor}, thus the name Classic Actor-Critic.

\subsection{Continuous State and Action Spaces}\label{sec:td_continuous}
The analysis for Continuous State Action space TD(0) with linear function approximation uses the analysis from \cite{dalal2017finite} to characterize the rate of convergence for the critic. Their analysis requires the following common assumption.
\begin{assumption} \label{assumption:bounded_second_moments} There exists a constant $K_s > 0$ such that for the filtration $\mathcal{G}_t$ defined for the TD(0) critic updates, we have
\begin{equation}
\E[\|M_{t+1}\|^2 \vert \mathcal{G}_t] \leq K_s[1 + \|\xi_t - \xi_*\|^2],
\end{equation}
where $M_{t+1}$ is defined as
\begin{equation}
M_{t+1} = \left(r_t  + \gamma \xi_t^\top \varphi(s_{t+1},a_{t+1}) - \xi_t^\top \varphi(s_{t},a_{t}) \right) \varphi(s_{t},a_{t}) - b + A
\end{equation}
where 
\begin{equation}
\begin{split}
b := \E_{s,a\sim \pi(s)}[r(s,a)\varphi(s,a)], & \textrm{~and~} A :=  \E_{s,a\sim \pi(s)}[\varphi(s,a)(\varphi(s,a) - \gamma \varphi(s',a'))^\top]\\
\end{split}
\end{equation}
\end{assumption}
Assumption \ref{assumption:bounded_second_moments} is known to hold when the samples have uniformly bounded second moments, which is a common assumption for convergence results \citep{sutton2009fast,sutton2009convergent}. \blue{In the same way the projected Bellman error is strongly convex [see \eqref{equ:Hessian}], it is known that $A$ is positive definite. As such, we define $\lambda_\textrm{TD} \in (0, \lambda_\textrm{min} (A + A^\top))$. The value of $\lambda_\textrm{TD}$ is conditioned on the feature representation of the state space, which is chosen \emph{a priori}. However, this value plays an important role in determining the rate of convergence for TD(0), as we see in the following corollary. }
\begin{corollary} \label{corr:cont}
Consider the actor parameter sequence defined by Algorithm \ref{alg:TD0}. Suppose the actor step-size is chosen as $\eta_k = k^{-1/2}$ and the critic step-size takes the form
$\alpha_t = {1}/{(t+1)^\sigma}$
%
where \blue{$\sigma \in (0,1)$}. Then, for large enough $k$,
\begin{equation}\label{eq:corr_con}
K_\epsilon \leq \mathcal{O}\left(\epsilon^{-2/\sigma}\right)
\end{equation}
\end{corollary}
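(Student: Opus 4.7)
The plan is to reduce the corollary to a direct application of Theorem \ref{thm:general_rate} by importing the finite-sample rate for TD(0) with linear function approximation from \cite{dalal2017finite} and massaging it into the form required by Assumption \ref{as:critic_bound}. The overall structure mirrors the proof of Corollary \ref{cor:GTD} for the Gradient Actor-Critic case: pick off the critic convergence exponent, feed it into the master theorem, and read off the sample complexity.

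First I would invoke the main result of \cite{dalal2017finite}, which under Assumption \ref{assumption:bounded_second_moments} together with the standing bounded-reward and bounded-feature conditions (Assumptions \ref{assum:regularity}\ref{as:bounded_reward} and \ref{as:bounded_feature_map}) shows that for step-size $\alpha_t = 1/(t+1)^{\sigma}$ with $\sigma \in (0,1)$, the TD(0) iterates satisfy a mean-square bound of the form
\begin{equation}
\E[\|\xi_t - \xi_*\|^2] \leq C_5 \, t^{-\sigma}
\end{equation}
for all $t$ large enough and some constant $C_5 > 0$. This is the TD(0) analog of the SCGD rate used in Corollary \ref{cor:GTD}. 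I would verify that the assumptions of the cited theorem are in fact implied by what we already assume: boundedness of the reward and features gives the $K_s$-growth bound required in Assumption \ref{assumption:bounded_second_moments}, and i.i.d.\ sampling from the stationary distribution (Assumption \ref{as:IID}) supplies the martingale difference structure that the Dalal et al. analysis relies upon.

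Next I would convert the mean-square bound to a first-moment bound via Jensen's inequality:
\begin{equation}
\E[\|\xi_t - \xi_*\|] \leq \sqrt{\E[\|\xi_t - \xi_*\|^2]} \leq \sqrt{C_5} \, t^{-\sigma/2}.
\end{equation}
This identifies the exponent $b = \sigma/2$ in Assumption \ref{as:critic_bound}, with $L_1 = \sqrt{C_5}$. Because the corollary restricts $\sigma \in (0,1/2)$, we obtain $b = \sigma/2 \in (0, 1/4)$, which in particular lies in the slow regime $b \in (0, 1/2)$ of Theorem \ref{thm:general_rate}.

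Finally I would plug $b = \sigma/2$ and $\eta_k = k^{-1/2}$ (so $a = 1/2$) into the first branch of \eqref{eq:thm1_main2} to conclude
\begin{equation}
K_\epsilon \leq \mathcal{O}\bigl(\epsilon^{-1/b}\bigr) = \mathcal{O}\bigl(\epsilon^{-2/\sigma}\bigr),
\end{equation}
which is precisely \eqref{eq:corr_con}. The only delicate point I anticipate is the square-root loss in the Jensen step: the mean-square critic rate $t^{-\sigma}$ becomes only $t^{-\sigma/2}$ after taking square roots, and this halving is what forces the exponent $2/\sigma$ rather than $1/\sigma$ in the final bound. The other non-routine ingredient is a careful bookkeeping check that the constants in the Dalal et al.\ bound do not depend on the current policy iterate $\theta_k$ in a way that breaks the application of Theorem \ref{thm:general_rate}; since the feature norms and reward are uniformly bounded across policies, the constants $C_5, L_1$ can be taken independent of $k$, which closes the argument.
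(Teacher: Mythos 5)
Your proposal is correct and follows essentially the same route as the paper's own proof: invoke the TD(0) mean-square rate of \cite{dalal2017finite} (whose exponential term $K_1 e^{-K_2 t^{1-\sigma}}$ is what produces the ``for large enough $k$'' caveat you folded into your constant), apply Jensen's inequality to obtain $\E[\|\xi_t - \xi_*\|] \leq \mathcal{O}(t^{-\sigma/2})$, identify $b = \sigma/2 \in (0,1/2)$ in Assumption \ref{as:critic_bound}, and conclude $K_\epsilon \leq \mathcal{O}(\epsilon^{-2/\sigma})$ from case (i) of Theorem \ref{thm:general_rate}. Your extra bookkeeping---checking that the hypotheses of the cited TD(0) theorem follow from the standing assumptions and that the constants are uniform in the policy iterate---goes slightly beyond what the paper records but does not alter the argument.
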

\begin{proof}
Here we invoke the TD(0) convergence result from \cite[Theorem 3.1]{dalal2018finite} which establishes that
\begin{equation}\label{eq:td0_continuous_convergence}
\E[\|\xi_t - \xi_*\|^2] \leq K_1 e^{-\lambda_\textrm{TD} t^{1-\sigma}/2} + \frac{K_2}{t^\sigma}
\end{equation}
for some positive constants $K_1$ and $K_2$. \blue{For $\sigma$ not close to $1$, the first term is dominated by \newblue{ ${K_2}/{t^\sigma}$}}, which permits us to write that
\begin{equation}
\E[\|\xi_t - \xi_*\|^2] \leq \mathcal{O}\left(\frac{1}{t^\sigma}\right)
\end{equation}
Applying Jensen's inequality, we have 
\begin{equation}
\E[\|\xi_t - \xi_*\|]^2 \leq \E[\|\xi_t - \xi_*\|^2] \leq \mathcal{O}\left(\frac{1}{t^\sigma}\right).
\end{equation}
Taking the square root on both sides  gives us 
\begin{equation}
\E[\|\xi_t - \xi_*\|] \leq \mathcal{O}\left(\frac{1}{t^{\sigma/2}}\right),
\end{equation}
which means that the convergence rate statement of \blue{Proposition \ref{prop:critic_bound}} is satisfied with parameter $b = \sigma/2$. Because $\sigma < 1/2$, this specializes Theorem \ref{thm:general_rate}, specifically, \eqref{eq:thm1_main2} to case (i), which yields the rate
\begin{equation}
K_\epsilon \leq \mathcal{O}\left(\epsilon^{-2/\sigma} \right) \; .
\end{equation}
Thus the claim in Corollary \ref{corr:cont} is valid.
\end{proof}
\blue{The operative phrase in the proof of the previous theorem is \emph{for $\sigma$ not close to 1}. This is because we want the first second term of \eqref{eq:td0_continuous_convergence} to dominate the first term so that Proposition \ref{prop:critic_bound} holds. Asymptotically, this is not a problem, however for finite sample complexity, the point at which the exponential term is dominated by the second term is highly sensitive to both $\lambda_\textrm{TD}$ and $\sigma$. The choice of $\sigma$ can be chosen to be larger as the value of $\lambda_\textrm{TD}$ grows. The choice of $\sigma$ as a function of $\lambda_\textrm{TD}$ and the number of iterates is summarized in Figure \ref{fig:sigma_lambda}.}

\begin{figure}[!t]
	\centering
		\includegraphics[trim = {6cm, 8cm, 6cm, 8cm},width = .6\linewidth]{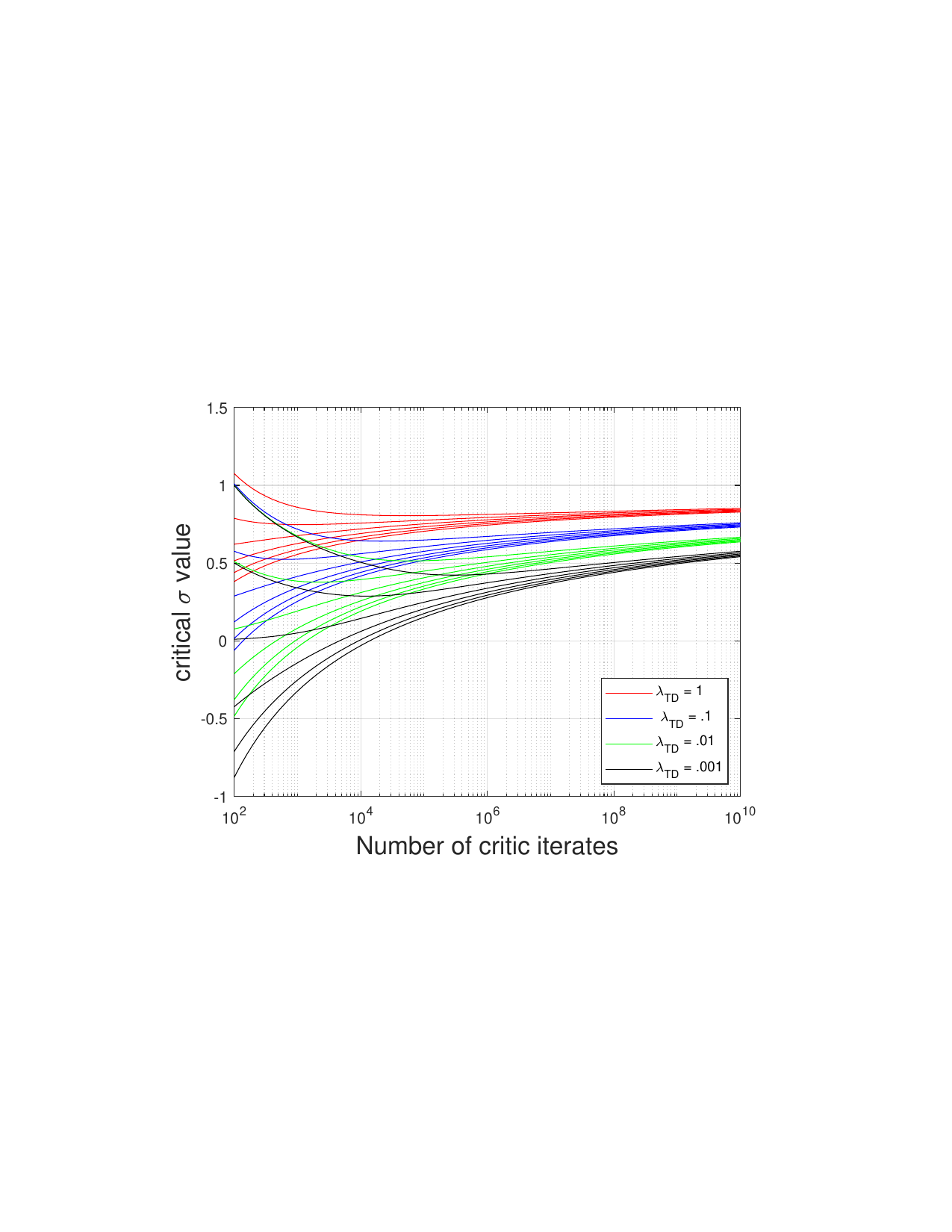}
		%
	\caption{\blue{ Plot shows the critical value of $\sigma$ for which the exponential term of \eqref{eq:actor_critic_complexity_finite_td} is dominated by the second term, thereby allowing Proposition \ref{prop:critic_bound} to hold. \emph{In particular, any $\sigma> 0$ chosen between zero and the curves shown above satisfies the proposition.} We show plots for varying values of $\lambda_\textrm{TD}$, which is determined by the feature space representation. For each value of $\lambda_\textrm{TD}$, we vary the ratio of the constants $K_2/K_1$ from $.001$ to $100$.  }}
	\label{fig:sigma_lambda}
\end{figure}

\blue{We find that as the value of $\lambda_\textrm{TD}$ increases, the critical value of $\sigma$ also increases. This means that the stepsize of the critic can be chosen to be larger, allowing for faster convergence.  Again, we define the critical value of $\sigma$ to be the point at which both terms on the right hand side of \eqref{eq:actor_critic_complexity_finite_td} are equal at a specific time $t$. Therefore, the feature space representation plays a large role on the performance of actor-critic with TD(0) updates. This result becomes apparent in our numerical results (section \ref{sec:num_results}). }
%
We note that, the GTD rates given in Corollary \ref{cor:GTD} hinge upon strong convexity of the projected Bellman error, which may hold for carefully chosen state-action feature maps, bounded parameter spaces, and lower bounds on the reward. These conditions are absent for TD(0) critic updates.

%

In the next section, we will consider analysis of actor-critic with TD(0) critic updates in the case where the state and action spaces are finite. As would be expected, this added assumption significantly improves the bound on the rate of convergence, i.e., reduces the sample complexity needed for policy parameters that are within $\epsilon$ of stationary points of the value function.

\subsection{Finite State and Action Spaces}\label{sec:td_finite}
In this section, we characterize the rate of convergence for the actor-critic defined by Algorithm \ref{alg:AC_generic} with TD(0) critic updates (Algorithm \ref{alg:TD0}) when the number of states and actions are finite, i.e., $|\mathcal{S}| = S<\infty$ and  $|\mathcal{A}| = A<\infty$. This setting yields faster convergence. 
A key quantity in the analysis of TD(0) in finite spaces is the minimal eigenvalue of the covariance of the feature map $\phi(s,a)$ weighted by policy $\pi(s)$, which is defined as 
\begin{equation}\label{eq:minimal_eigenvalue}
\omega =\min\left\{\textrm{eig} \left(\sum_{(s,a) \in \mathcal{S}\times \mathcal{A}} \pi(s) \varphi(s,a)\varphi(s,a)^\top\right)\right\} \; .
\end{equation}
 That $\omega$ exists is an artifact from the finite state action space assumption. \eqref{eq:minimal_eigenvalue} is used to define conditions on the rate of step-size attenuation for TD(0) [cf. \eqref{eq:td_update}] critic updates in \citep[Theorem 2 (c)]{bhandari2018finite}, which we invoke to establish the iteration complexity of actor-critic in finite spaces. We do so next.
%
\begin{corollary}\label{corr:actor_critic_complexity_finite_td} Consider the actor parameter sequence defined by Algorithm \ref{alg:TD0}. Let the actor step-size satisfy $\eta_k = k^{-1/2}$ and the critic step-size decrease as
%
$\alpha_t = {\beta}/({\lambda + t})$
%
where $\beta = 2/\omega(1-\gamma)$ and $\lambda = 16/\omega(1-\gamma)^2$. Then when the number of critic updates per actor update satisfies $T_C(k) = k+1$, the following convergence rate holds
\begin{equation}\label{eq:actor_critic_complexity_finite_td}
K_\epsilon \leq O\left(\epsilon^{-2}\right)
\end{equation}
\end{corollary}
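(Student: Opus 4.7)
The plan is to specialize Theorem \ref{thm:general_rate} by plugging the finite-space convergence rate for TD(0) with linear function approximation from \cite{bhandari2018finite} into Assumption \ref{as:critic_bound}. The structural feature that distinguishes this case from the continuous one in Corollary \ref{corr:cont} is that, on a finite state-action space, the feature covariance appearing in \eqref{eq:minimal_eigenvalue} is a finite symmetric positive semi-definite matrix with a strictly positive minimum eigenvalue $\omega$, which drives a fast contraction of the TD iterates around the fixed point $\xi_\ast$.

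First, I would invoke Theorem 2(c) of \cite{bhandari2018finite}: under Assumption \ref{assumption:bounded_second_moments} and with the prescribed step size $\alpha_t = \beta/(\lambda+t)$, $\beta = 2/(\omega(1-\gamma))$, $\lambda = 16/(\omega(1-\gamma)^2)$, the squared critic error obeys
\[\E[\|\xi_t-\xi_*\|^2] \;\leq\; O\!\left(\tfrac{1}{t}\right).\]
Bounded rewards (Assumption \ref{assum:regularity}\ref{as:bounded_reward}) and bounded features (Assumption \ref{as:bounded_feature_map}) on a finite space make Assumption \ref{assumption:bounded_second_moments} automatic.

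Next, by Jensen's inequality,
\[\bigl(\E[\|\xi_t-\xi_*\|]\bigr)^2 \;\leq\; \E[\|\xi_t-\xi_*\|^2] \;\leq\; O\!\left(\tfrac{1}{t}\right),\]
so $\E[\|\xi_t-\xi_*\|] \leq O(t^{-1/2})$; that is, Assumption \ref{as:critic_bound} holds with $b = 1/2$. I would then substitute this $b$ and the actor step size $\eta_k = k^{-1/2}$ (so $a = 1/2$) into Theorem \ref{thm:general_rate}. Because $\ell = \min\{a, 1-a, b\} = 1/2$, both sides of the case split in \eqref{eq:thm1_main2} collapse to
\[K_\epsilon \;\leq\; O(\epsilon^{-1/\ell}) \;=\; O(\epsilon^{-2}).\]
The choice $T_C(k) = k+1$ versus $T_C(k) = k$ is immaterial: the extra $+1$ only shifts constants in the sum $\sum_j T_C(j)^{-b}$ that appears in the proof of Theorem \ref{thm:general_rate}, so the asymptotic rate is unchanged.

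The only substantive step is verifying the hypotheses of \cite{bhandari2018finite}[Theorem 2(c)] in our setting---in particular Assumption \ref{assumption:bounded_second_moments} and the strict positivity of $\omega$. Both follow from the finite state-action assumption together with a mild full-column-rank condition on $\varphi$, which is standard. Once the critic rate is in hand the corollary is a drop-in substitution into Theorem \ref{thm:general_rate}, mirroring Corollaries \ref{cor:GTD} and \ref{corr:cont}; no further optimization-side work is required beyond the algebra already executed in the proof of Theorem \ref{thm:general_rate}.
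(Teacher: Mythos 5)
Your proposal is correct and takes essentially the same route as the paper's proof: invoke \cite{bhandari2018finite}[Theorem 2(c)] for the $O(1/t)$ bound on $\E[\|\xi_t-\xi_*\|^2]$, convert it via Jensen's inequality and a square root to $\E[\|\xi_t-\xi_*\|]\leq O(t^{-1/2})$ so that Assumption \ref{as:critic_bound} holds with $b=1/2$, then substitute into Theorem \ref{thm:general_rate} to obtain $K_\epsilon \leq O(\epsilon^{-2})$. Your two added remarks---that $b=1/2$ sits exactly on the boundary of the case split in \eqref{eq:thm1_main2} but the formulation $\ell=\min\{a,1-a,b\}=1/2$ still delivers the rate, and that the choice $T_C(k)=k+1$ versus $T_C(k)=k$ only shifts constants---are careful points the paper's proof leaves implicit, but they do not change the argument.
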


\begin{proof}
We begin by invoking the TD(0) convergence result \citep[Theorem 2 (c)]{bhandari2018finite}:
\begin{equation}
\E[\|\xi_t - \xi_*\|^2] \leq \mathcal{O}\left(\frac{K_1}{t + K_2}\right),
\end{equation}
for some constants $K_1, K_2$ which depend on $\omega$ and $\sigma$. Applying Jensen's inequality, we have 
\begin{equation}
\E[\|\xi_t - \xi_*\|]^2 \leq \E[\|\xi_t - \xi_*\|^2] \leq \mathcal{O}\left(\frac{K_1}{t+K_2}\right).
\end{equation}
Taking the square root on both sides yields 
\begin{equation}
\E[\|\xi_t - \xi_*\|]\leq \mathcal{O}\left(\frac{K_1^{-1/2}}{(t+K_2)^{-1/2}}\right) \lesssim \mathcal{O}(t^{-1/2}),
\end{equation}
which means that Proposition \ref{prop:critic_bound} is valid with critic convergence rate parameter $b = 1/2$. Therefore, we may apply Theorem \ref{thm:general_rate} to obtain the rate 
\begin{equation}
K_\epsilon \leq \mathcal{O}\left(\epsilon^{-2} \right)
\end{equation}
as stated in Corollary \ref{corr:actor_critic_complexity_finite_td}.
\end{proof}


\section{Numerical Results} \label{sec:num_results}
In this section, we compare the convergence rates of actor-critic with the aforementioned critic-only methods on a two-dimensional navigation problem and the inverted pendulum. Before detailing the RL problem specifics, we first discuss the metrics we use to evaluate both performance and convergence. 

Because the main objective is to maximize the long term average reward accumulation, it follows naturally to measure the cumulative reward of a trajectory. We evaluate the policy without action noise $(\sigma^2 = 0)$, with a fixed trajectory length, and with a fixed starting position which makes the plots easier to compare. In addition, we consider a proxy for the gradient norm. \blue{In particular, we calculate the norm of the difference between two consecutive normalized actor parameters $(\|\theta_k/ \|\theta_k\| - \theta_{k+1}/\|\theta_{k+1}\|\|)$. The normalization treats two scaled versions of the same parameter equivalently. This is meaningful because the action vector field induced by the parameters (see Fig \ref{fig:gradients}) are similarly scaled versions of each other.} In this form, the gradient norm proxy serves as the optimization metric on which our main result is based. 

Along with varying the critic-only methods, we elect to consider two additional variations on policy gradient where the $Q$ function is replaced by the \emph{advantage} and \emph{value} functions. Recall the definition of the value function from \eqref{equ:value}. The advantage function is defined by $A(s_t, a_t) = Q(s_t, a_t) - V(s_t)$, which, by definition of the $Q$ function, can also take the form of $A(s_t, a_t) = r_{t+1} + \gamma V(s_{t+1}) - V(s_t)$ \citep{mnih2016asynchronous}. The main benefit of using the value function and advantage functions instead of the $Q$ function for actor critic is that the dimension of the function approximator domain is smaller, as the agent only needs to learn on the state space.  

\subsection{Navigating around an obstacle}
We consider the problem of a point agent starting at an initial state $s_0 \in \mathbb{R}^2$ whose objective is to navigate to a destination $s^*\in \mathbb{R}^2$ while remaining in the free space at all time. The free space \blue{$\mathcal{X}\subset \mathbb{R}^2$} is defined by 
\begin{equation}
\blue{\mathcal{X}} := \left\{s\in \mathbb{R}^2 \Big| \|s\|  \in [0.5,4] \right\}.
\end{equation}
The feature representation of the state is determined by a radial basis (Gaussian) kernel where 
\begin{equation} \label{equ:kernel}
\kappa(s,s') = \exp\left\{ \frac{-\|s-s'\|_2^2}{2\sigma^2}\right\}.
\end{equation}
The $p$ kernel points are chosen evenly on the $[-5,5]\times [-5,5]$ grid so that the the feature representation becomes 
\begin{equation}
\varphi(s) = \begin{bmatrix} \kappa(s,s_1) & \kappa (s,s_2)  & \dots & \kappa (s,s_p) 
\end{bmatrix}^\top,
\end{equation}
which we normalize. Given the state $s_t$, the action is sampled from a multivariate Gaussian distribution with covariance matrix $\Sigma = 0.5\cdot I_2$ and mean given by $\theta_k^\top \varphi(s_t)$. We let the action determine the direction in which the agent will move. As such, the state transition is determined by $s_{t+1} = s_t + 0.5a_t/\|a_t\|$. 

Because the agent's objective is to reach the target $s^*$ while remaining in $F$ for all time, we want to penalize the agent heavily for taking actions which result in the next step being outside the free space and reward the \blue{agent} for being close to the target. As such, we define the reward function to be 
\begin{equation} \label{equ:reward}
r_{t+1} = \begin{cases} -11 & \textrm{~if~} s_{t+1} \notin \blue{\mathcal{X}} \\ 
-0.1 & \textrm{~if~}  \|s_{t+1} - s^*\| < 0.5  \\
-1 & \textrm{~otherwise~}.
\end{cases}
\end{equation}
The design of this reward function for the navigation problem is informed by the \citep{Zhang_SICON}, which suggests that the reward function should be bounded away from zero. 
In this simulation, we allow for the agent to continue taking actions \emph{through} the obstacles. This formulation is similar to a car driving on a race track which has grass outside the track. The car is allow is allowed to drive off the track, however it incurs a larger cost due to the substandard driving conditions. 

Although it is true that this particular formulation does not allow for generalization, that is, if the target of the agent, obstacle location, or starting point of the agent are moved, the agent would have to start from scratch to learn a new meaningful policy, we emphasize that it is the rates of convergence which are of interest in this exposition, not necessarily finding the best way to design the navigation problem.\\
\blue{ \textbf{Algorithm Specifics:} We consider the problem with $\gamma = 0.97$. In practice, we use the entire trajectory data for the critic updates. In particular, } for each actor parameter update, we \blue{run} ten critic updates with rollout length $T = 66$ (comes from the expected rollout length given $\gamma = 0.97$). Similarly, we update the actor along the trajectory of rollout length \blue{H = 67}. 
For simulations, the actor update step $\eta_t$ is chosen to be constant $\eta =10^{-3}$. For TD(0), we let also let the critic stepsize be constant, namely  $\alpha_t = \alpha =  0.05$. For GTD, we let $\alpha_t = t^{-1}$ and $\beta_t = t^{-2/3}$. For A-GTD, we set $\alpha_t = t^{-1}$ and $\beta_t = t^{-4/5}$. We \blue{draw the initial distribution uniformly at random on the grid $[-2,2]\times [-2,2]$}, and we set the target to be $s^* = (-2,-2)$. For each critic only method, we run the algorithm \blue{50} times. We evaluate the policy by measuring the accumulated reward of a trajectory of length $H = 66$. 
\begin{figure}[!t]
	\centering
	\begin{tabular}{cc}
		\includegraphics[trim = {6cm, 8cm, 6cm, 8cm},width = .35\linewidth]{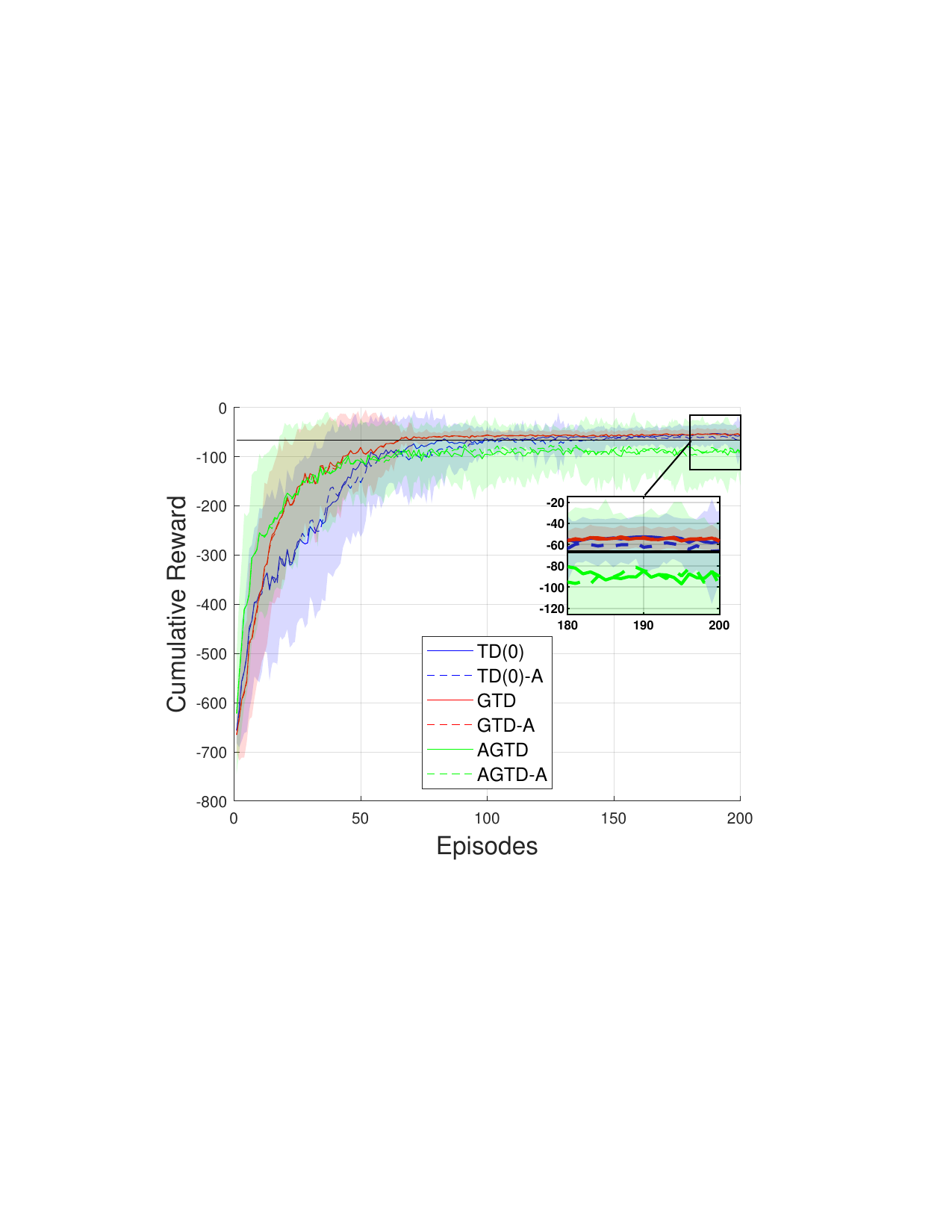} & \hspace{2cm}
		\includegraphics[trim = {6cm, 8cm, 6cm, 8cm},width = .35\linewidth]{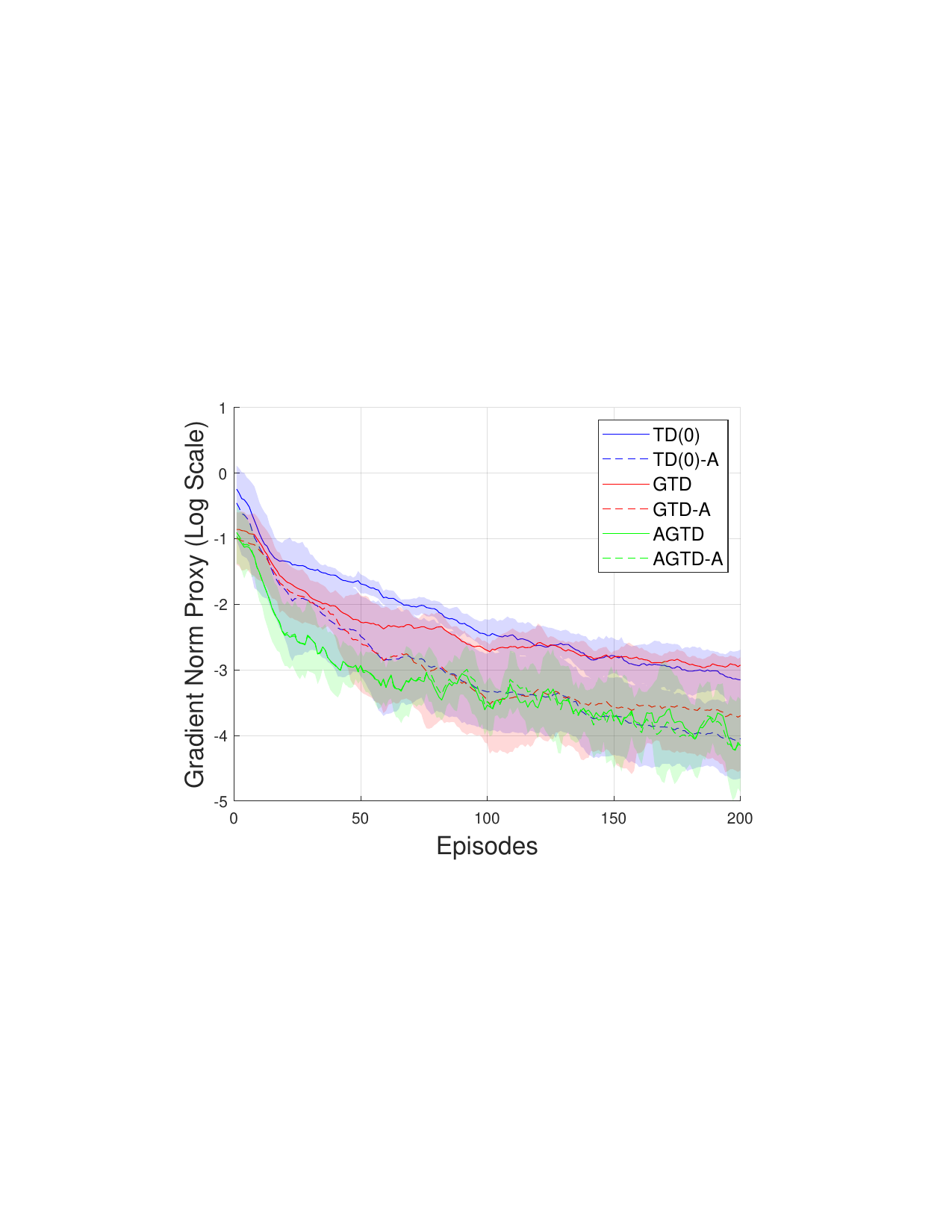} \\
		\small (a) & \hspace{2cm} \small (b)
	\end{tabular}
	\caption{Navigation Problem: (a) Average reward per episode with confidence bounds over 50 trials. (b) Average gradient norm proxy over 50 trials. A-GTD converges fastest with respect to the cumulative reward \emph{and} gradient norm proxy at the cost of converging to a suboptimal stationary point (see Fig. \ref{fig:gradients}). \newblue{A moving average filter of size ten has been applied on the gradient norm proxy to aid in comparison. }}
	\label{fig:rates}
\end{figure}

\begin{figure}[!t]
	\centering
	\begin{tabular}{ccc}
		\includegraphics[trim = {6cm, 8cm, 6cm, 8cm},width = .25\linewidth]{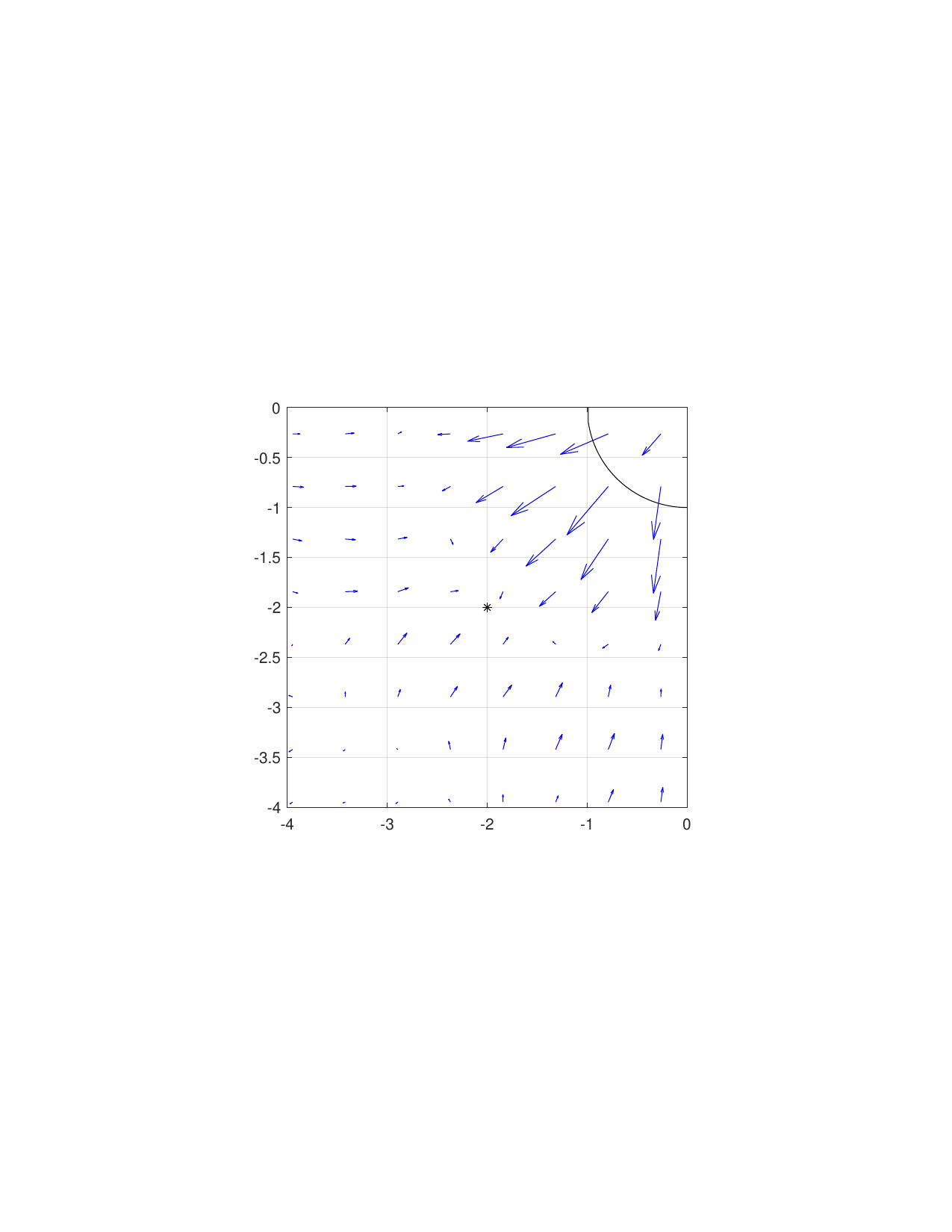} &\hspace{.9cm}
		\includegraphics[trim = {6cm, 8cm, 6cm, 8cm},width = .25\linewidth]{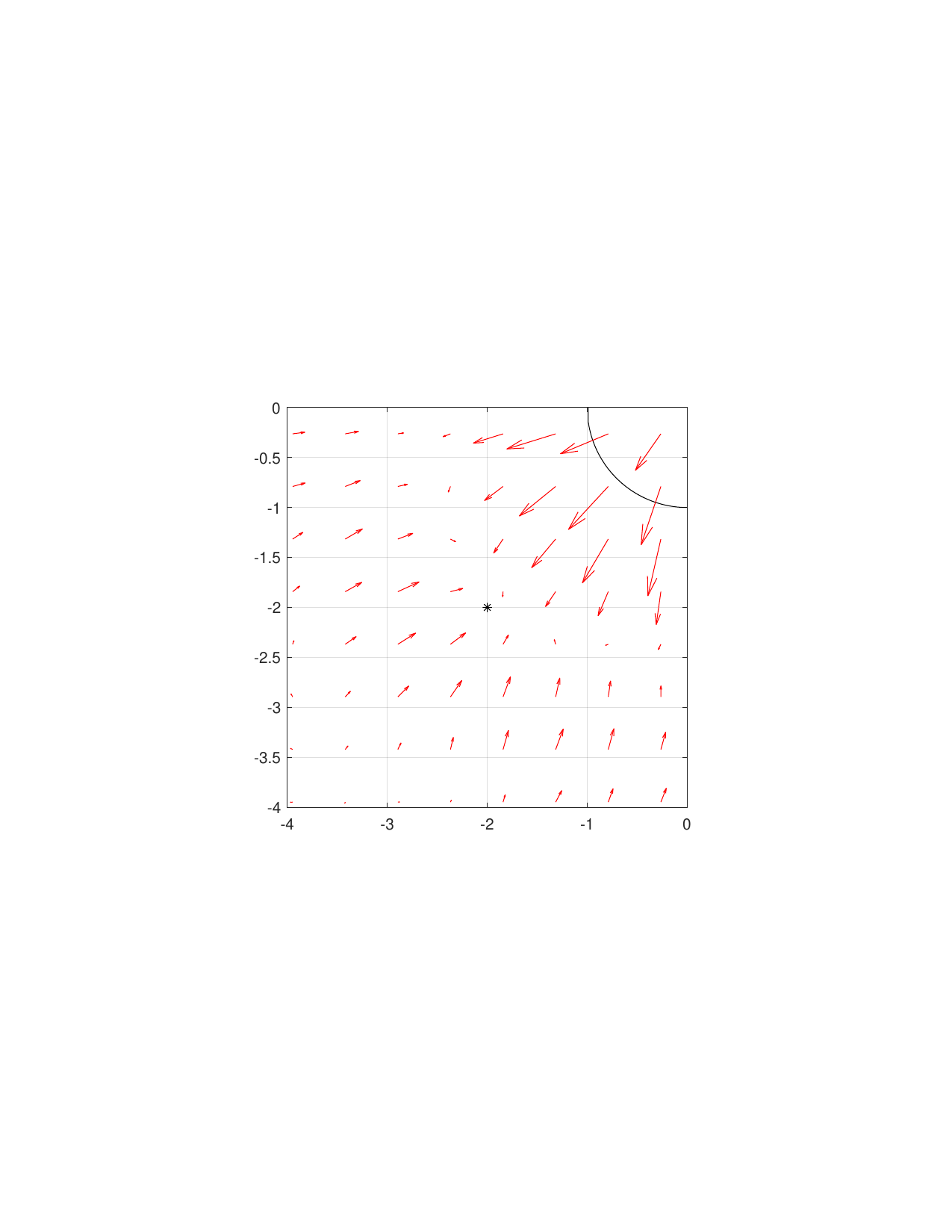} 
		&\hspace{.9cm}
		\includegraphics[trim = {6cm, 8cm, 6cm, 8cm},width = .25\linewidth]{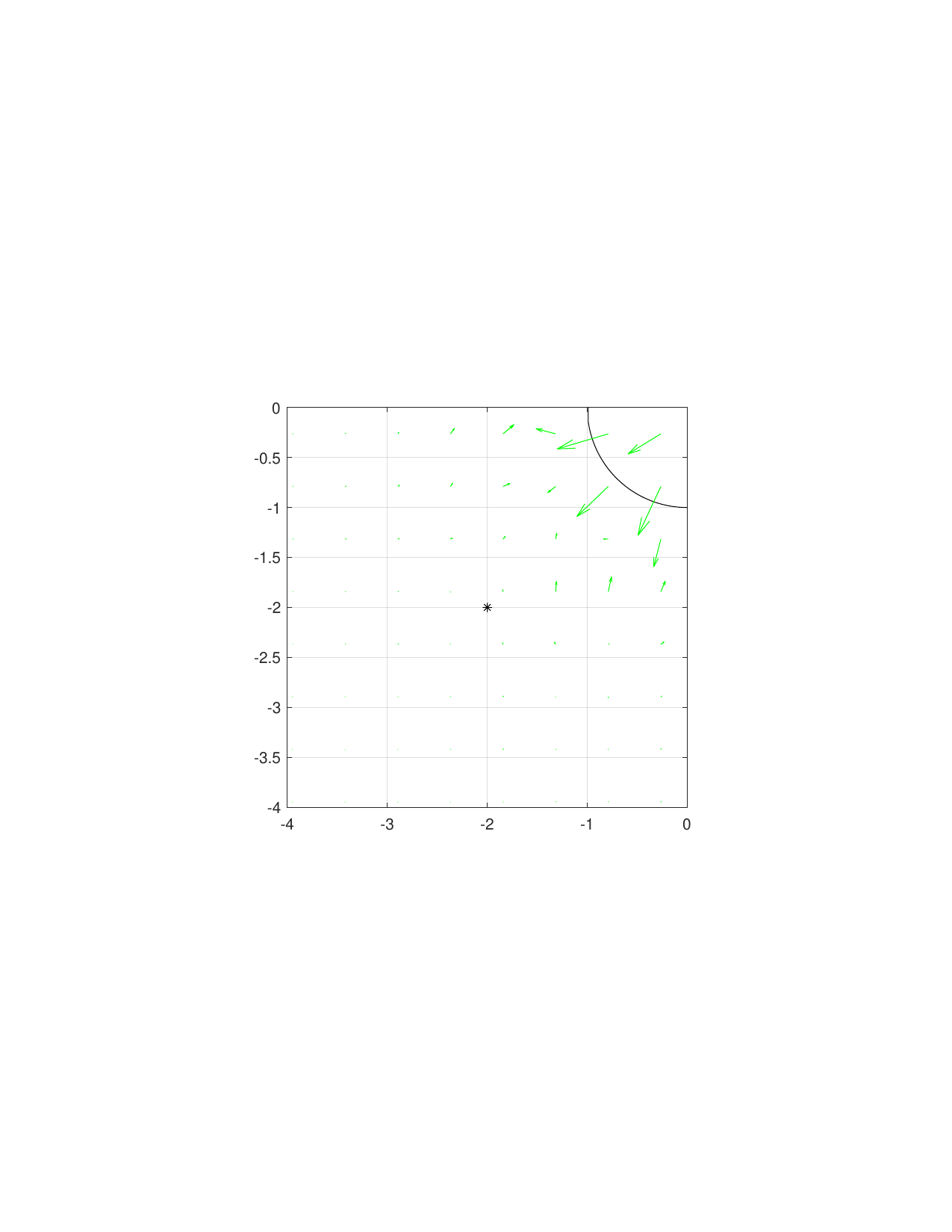} \\	
		\small (a) & \hspace{1cm} \small (b) & \hspace{1cm} \small (c) 
		
	\end{tabular}
	\caption{\blue{Visualization of the learned policy for the navigation problem. The obstacle is shown in the top right corner, and the target is located at (-2,-2). As Figure \ref{fig:rates}  (a) depicts, TD (shown in (a)) and GTD (shown in (b)) learn meaningful policies which guide the agent to the target. In contrast, A-GTD (shown in (c)) simply learns to avoid the obstacle.}}
	\label{fig:gradients}
\end{figure}

\subsection{Pendulum Problem}
\blue{
	We also consider the canonical continuous state action space reinforcement learning problem of the pendulum. The objective is to balance the pendulum upright starting from any starting position. Given that this is a well established benchmark for reinforcement learning, we refer the reader to \cite{brockman2016openai} for the specifications on reward and transition dynamics. Similar to the navigation problem, we let the feature representation of the state be determined by a radial basis (Gaussian) kernel (c.f. \eqref{equ:kernel}) where the $p$ kernel points are chosen evenly on $[-1, -1, -8, -2] \times [1, 1, 8, 2]$, where the bounds come from the sine and cosine of the angle $\theta$, the time derivative of the angle $\dot \theta$, and the maximum torque of the action respectively. The action is chosen by a normal distribution with mean $\xi^\top \varphi(s,a)$ and variance $\sigma_a^2$. \newblue{Like the navigation problem, we use a linear policy and linear critic. Again, we stress that these experiments are meant to show the rates of convergence, and not necessarily finding the best way to solve the pendulum problem.  For the pendulum problem, we only consider advantage actor-critic.}
	
	\blue{ \textbf{Algorithm Specifics:} Similar to the navigation problem, we let $\gamma = 0.97$, and we use the entire trajectory data for the critic updates. In particular, } for each actor parameter update, we \blue{run} ten critic updates with rollout length $T = 66$ (comes from the expected rollout length given $\gamma = 0.97$). Similarly, we update the actor along the trajectory of rollout length \blue{H = 66}. 
	For simulations, the actor update step $\eta_t$ is chosen to be constant $\eta =0.01$. \newblue{For critic only methods, we also let also let the critic stepsize be constant. In particular, we let $\alpha_t = 0.01$ for TD(0), $(\alpha_t, \beta_t) = (0.2, 0.01)$ for GTD, and $(\alpha_t, \beta_t) = (0.05, 0.005)$ for AGTD. We evaluate the policy by measuring the average accumulated by a single trajectory starting $\theta = \pi/2$ with angular velocity $\omega = 1$. The action variance is chosen to be $\sigma_a^2 = 0.5$}. 

}

\begin{figure}[!t]
	\centering
	\begin{tabular}{cc}
		\includegraphics[trim = {6cm, 8cm, 6cm, 8cm},width = .35\linewidth]{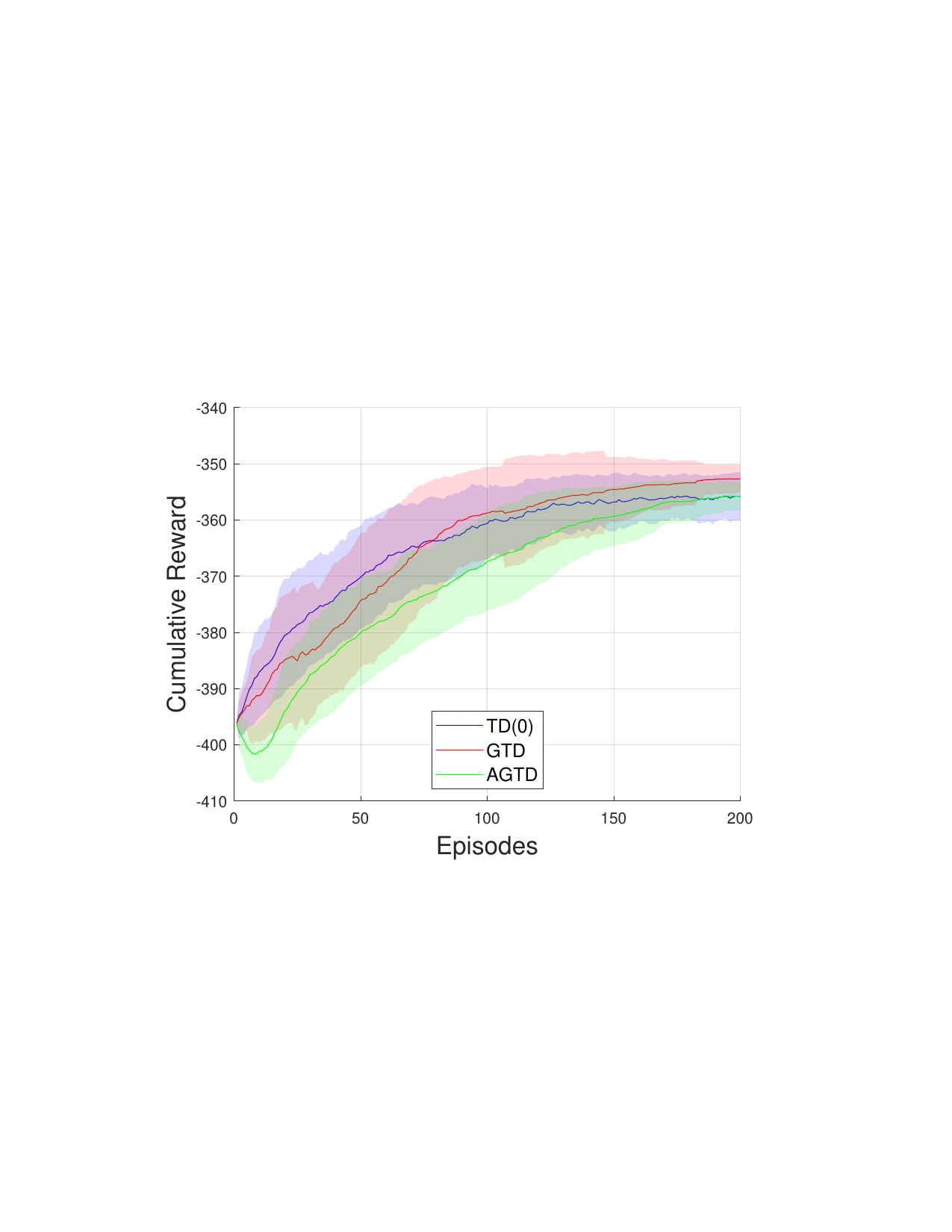} & \hspace{2cm}
		\includegraphics[trim = {6cm, 8cm, 6cm, 8cm},width = .35\linewidth]{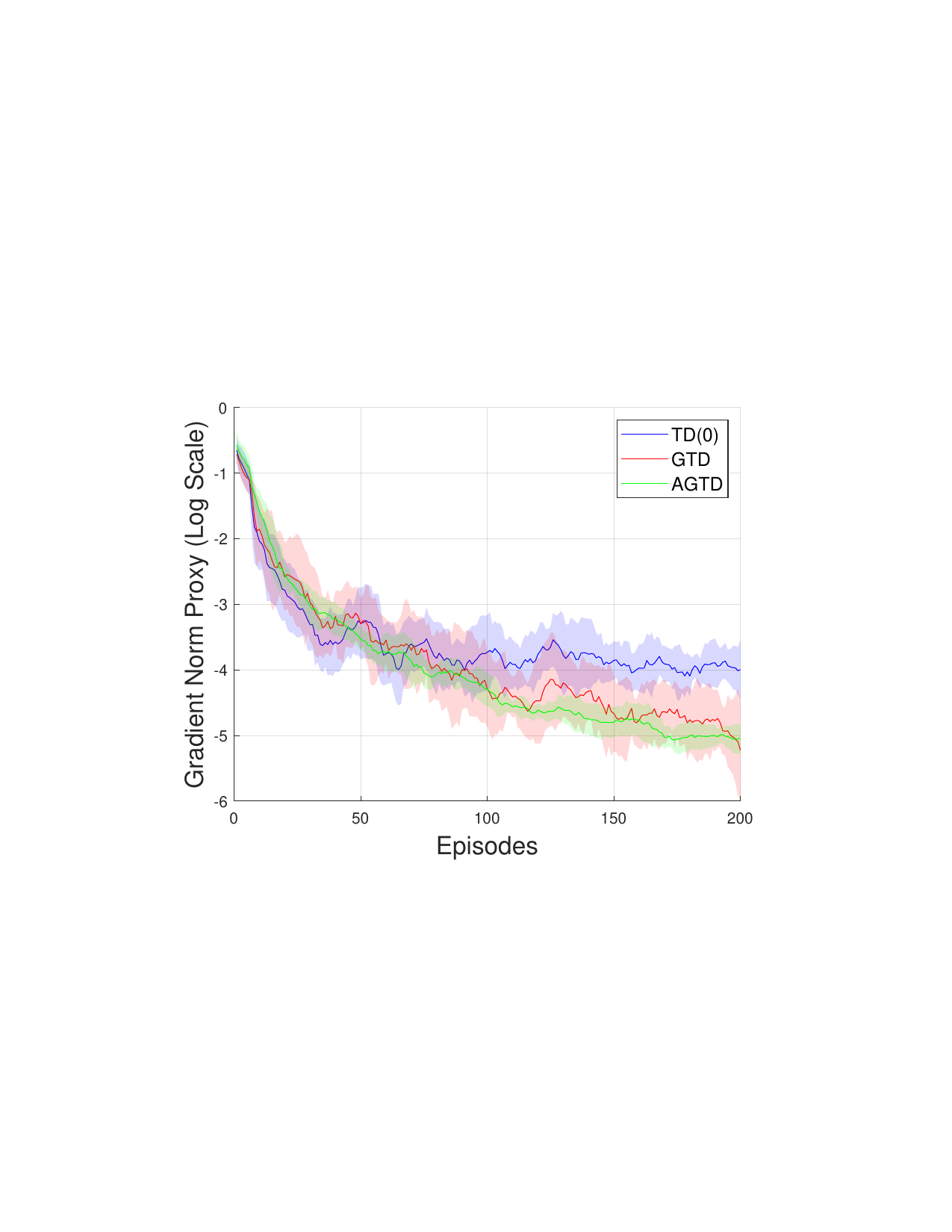} \\
		\small (a) & \hspace{2cm} \small (b)
		%
	\end{tabular}
	\caption{\blue{Pendulum Problem: (a) Average reward per episode with confidence bounds over 50 trials. (b) Average gradient norm proxy over 50 trials. In contrast to the navigation problem there is a significant gain in using advantage actor-critic; here, the state action ($Q$) function was used instead of the value function ($V$).} \newblue{A moving average filter of size ten has been applied on the gradient norm proxy to aid in comparison. }}
	\label{fig:pendulum_rates}
\end{figure}


%

\subsection{Discussion}
Recall that the analysis of Corollaries \ref{cor:GTD}, \ref{cor:GTD2}, and \ref{corr:cont} establish that the convergence rates for GTD, A-GTD, and TD(0) are $O(\epsilon^{-3})$, $O(\epsilon^{-5/2})$, and $O(\epsilon^{-2/\sigma})$ respectively [also see Table \ref{tab:rates}]. Figure \ref{fig:rates} shows the performance of the navigation problem with value and advantage function policy gradient updates. \newblue{As expected, A-GTD converges fastest with respect to the gradient norm proxy, while GTD and TD(0) are comparable. }
The plots highlight a disconnect between the convergence in reward and the convergence in gradient norm. Namely, TD converges faster in gradient norm, but slower with respect to the cumulative reward. Even more interesting, although AGTD converges fastest with respect to gradient norm \emph{and} reward, its resulting stationary point is suboptimal compared to TD and GTD (see Fig \ref{fig:gradients}). On the other hand, GTD and TD(0) converge the slower, and they consistently reach the \emph{solved} region \newblue{marked by the solid black line at $-66$}. We say that rewards which are greater than $-66$ are \emph{solved} trajectories because these trajectory spend time in the destination region. A trajectory which does not reach the destination region will have accumulated reward of $-66$ or less. Taken together, these theoretical and experimental results suggest a tight coupling between the choice of training methodology and the quality of learned policies. Thus, just as the choice of optimization method, statistical model, and sample size influence generalization in supervised learning, they do so in reinforcement learning. Theorem \ref{thm:general_rate} characterizes the rate of convergence to a stationary point of the Bellman optimality operator, however it does not provide any guarantee on the quality of the stationary point. Figure \ref{fig:gradients} captures this trade-off convergence rate and quality of the stationary point.  

The disconnect between convergence in reward and convergence in gradient norm appears again in the pendulum. \newblue{Figure \ref{fig:pendulum_rates} (b) shows the gradient norm proxy for the advantage actor-critic applied to the pendulum problem. Consistent with Table \ref{tab:rates}, AGTD converges the fastest with followed by GTD and TD(0). Here, we again see the disconnect between convergence in gradient norm and cumulative reward. Notice how in the first few iterations, TD(0) actually converges the fastest. In tandem, the cumulative reward of TD(0) also increases quickly. By the final episode, TD(0) and AGTD perform worse than GTD. This is consistent with the convergence rate and quality of stationary point trade-off observed in the navigation problem.} 

There are a number of future directions to take this work. To begin, we can establish bounds on cases where the samples are not i.i.d., but instead have Markovian noise. Second, we can further generalize our results to consider a generic critic convergence rate that does not necessarily take the form of \blue{Proposition \ref{prop:critic_bound}}. \blue{Third, we can explore the choice of feature representation to explicitly characterize the convergence rate of actor-critic with TD(0) critic updates with respect to $\lambda_\textrm{TD}$.} Finally, we can characterize the behavior of the variance and use such characterizations to accelerate training.


%
%
%
%

\bibliographystyle{spbasic}
\bibliography{RL3}
 
\section{Appendix}

\subsection{Proof of Lemma \ref{lem:submart_prop}} \label{prof:lem2}

%
%
By the Mean Value Theorem, there exists $\tilde \theta_k = \lambda\theta_k + (1-\lambda)\theta_{k+1}$ for some $\lambda \in [0,1]$ such that 
\begin{equation} \label{eqn:meanvaluetheorem}
J(\theta_{k+1}) = J(\theta_k) + (\theta_{k+1} - \theta_k)^\top \nabla J(\tilde \theta_k).
\end{equation}
%
%
Add and subtract $(\theta_{k+1}-\theta_k)^\top \nabla J(\theta_k)$ to the right hand side of \eqref{eqn:meanvaluetheorem} to obtain
\begin{equation} \label{equ:wkp1_eq}
J(\theta_{k+1}) =  J(\theta_k) + (\theta_{k+1} - \theta_k)^\top \left( \nabla J(\tilde \theta_k) -\nabla J(\theta_k) \right) + (\theta_{k+1}-\theta_k)^\top \nabla J(\theta_k).
 \end{equation}
By Cauchy Schwartz, we know $(\theta_{k+1} - \theta_k)^\top \left( \nabla J(\tilde \theta_k) - \nabla J(\theta_k)\right) \geq -\|\theta_{k+1} - \theta_k\|\| \nabla J(\tilde \theta_k) - J(\theta_k)\|$. Further, by the Lipschitz continuity of the gradient, we know $\|\nabla J(\tilde \theta_k) - \nabla J(\theta_k)\| \leq L \|\tilde \theta_k - \theta_k\|$. Therefore, we have 
\begin{equation}
(\theta_{k+1} - \theta_k)^\top \left( \nabla J(\tilde \theta_k) - J(\theta_k)\right) \geq -L \|\tilde \theta_k - \theta_k\| \cdot \|\theta_{k+1} - \theta_k\| \geq -L \| \theta_{k+1} - \theta_k\|^2 \; ,
\end{equation}
where the second inequality comes from substituting $\tilde \theta_{k} = (1-\lambda)\theta_{k+1} + \lambda \theta_k$. We substitute this expression into the definition of \bluest{$J(\theta_{k+1})$} in \eqref{equ:wkp1_eq} to obtain 
\begin{equation}
J(\theta_{k+1}) \geq  J(\theta_k) + (\theta_{k+1}-\theta_k)^\top \nabla J(\theta_k) -L\| \theta_{k+1} - \theta_k\|^2 .
 \end{equation}
Take the expectation with respect to the filtration $\F_k$, and substitute the definition for the actor update \eqref{eq:actor_update}
\begin{equation}
\E[J(\theta_{k+1})|\F_k] \geq  J(\theta_k) + \E [\theta_{k+1}-\theta_k | \F_k]^\top \nabla J(\theta_k) +  -L \E[\| \eta_k \hat g^{AC}_{H(k)}\|^2|\F_k] .
\end{equation}
We know from \eqref{eqn:actor_critic_bounded_variance} that $\|\hat \nabla J(\theta_k)\|\bluest{^2} \leq \sigma^2$, as such we obtain
%

\begin{equation}\label{eqn:before_lts2}
\E[J(\theta_{k+1}) | \F_k] \geq  J(\theta_k) + \E [\theta_{k+1}-\theta_k | \F_k]^\top \nabla J(\theta_k) -L \sigma^2\eta_k^2 .
\end{equation}
%
%
%
Therefore, we are left to show that the last term on the right-hand side of the preceding expression is ``nearly" an ascent direction. \blue{Recall from Algorithm \ref{alg:AC_generic} that the $k^\textrm{th}$ update takes the form \eqref{equ:fin_H_update}, that is
	\begin{equation}
	\E \left[\theta_{k+1}- \theta_k  | \F_k \right] = \eta_k  \E\left[\hat g^{AC}_{H(k)} | F_k\right] = \eta_k \E\left[\nabla_\theta J(\theta_k)| \F_k\right] + \eta_k\E\left[\hat g^{AC}_{H(k)} - \nabla_\theta J(\theta) | \F_k\right]
	\end{equation}
	Substituting into \eqref{eqn:before_lts2}, from Theorem \ref{thm:finite_bias}, we obtain 
	\begin{equation}
	\begin{split}
	\E[\bluest{J(\theta_{k+1})} | \F_k] &\geq \bluest{J(\theta_k)} +\eta_k \|\nabla_\theta J(\theta_k)\|^2 + \eta_k\E\left[\hat g^{AC}_{H(k)} - \nabla_\theta J(\theta_k) | \F_k\right] ^\top \nabla_\theta J(\theta_k)\bluest{-L\sigma^2\eta_k^2}\\
	& \geq \bluest{J(\theta_k)} +\eta_k \|\nabla_\theta J(\theta_k)\|^2 - \eta_k \left| \E\left[\hat g^{AC}_{H(k)} - \nabla_\theta J(\theta_k) | \F_k\right] ^\top \nabla_\theta J(\theta_k)  \right|\bluest{-L\sigma^2\eta_k^2}\\
	& \geq \bluest{J(\theta_k)} +\eta_k \|\nabla_\theta J(\theta_k)\|^2 - \eta_k \| \E\left[\hat g^{AC}_{H(k)}\right] - \nabla_\theta J(\theta_k)\| \cdot \|\nabla_\theta J(\theta_k)\|\bluest{-L\sigma^2\eta_k^2}\\
	&\geq \bluest{J(\theta_k)} +\eta_k \|\nabla_\theta J(\theta_k)\|^2 - \eta_k C_\nabla\left(C_1 \gamma^{H(k)-1} + C_2T(k)^{-b}\right)\bluest{-L\sigma^2\eta_k^2} 
	\end{split}
	\end{equation}
	This concludes the proof.
}

\subsection{Proof of Theorem \ref{thm:general_rate}} \label{proof:thm1}

\bluest{Take the total expectation of \eqref{equ:lem1_result} from Lemma \ref{lem:submart_prop}}
\begin{equation}\label{eq:theorem1_proof_expectation}
\E[J(\theta_{k+1}) ] \geq  \E[J(\theta_k)]  + \eta_k \E[ \| \nabla J(\theta_k)\|^2]  - \eta_k C_\nabla C_1 \gamma^{H(k)-1} - \eta_k C_\nabla C_2 T_C(k)^{-b}-\bluest{ L \sigma^2 \eta_k^2}.
\end{equation}
Define $U_k := J(\theta^*) - J(\theta_{k})$ where $\theta^*$ is the solution of \eqref{equ:max_goal} when the policy is parameterized by $\theta$. By this definition, we know that $U_k$ is non-negative for all $\theta_k$. Add $J(\theta^*)$ to both sides of the inequality and rearrange terms
\begin{equation}\label{eq:theorem1_proof_suboptimality}
\eta_k \E[\|\nabla J(\theta_k)\| ] \leq \E[U_k] - \E[U_{k+1}] + L \sigma^2 \eta_k^2 +  \eta_k C_\nabla C_1 \gamma^{H(k)-1} + \eta_k C_\nabla C_2 T_C(k)^{-b}.
\end{equation}
Divide both sides by $\eta_k$ and take the sum over $\{k - N, \dots, k\}$ for some integer $1 < N <k$
\begin{equation}\label{eq:theorem1_proof_sum}
\begin{split}
\sum_{j = k - N}^k \E[\|\nabla J(\theta_j)\|^2] \leq &\sum_{j = k-N}^k \frac{1}{\eta_j} \left( \E[U_j] - \E[U_{j +1}] \right) +  L \sigma^2\sum_{j = k-N}^k \eta_j \\
& + \sum_{j = k -N}^k  \left( C_\nabla C_1 \gamma^{H(j)-1}  C_\nabla C_2 T_C(j)^{-b}\right).
\end{split}
\end{equation}
Add and subtract $1/\eta_{k - N - 1} \E[U_{k -N}]$ on the right hand side. This allows us to write
\begin{align} 
\sum_{j = k - N}^k \E[\|\nabla J(\theta_j)\|^2] &\leq \sum_{j = k-N}^k \left( \frac{1}{\eta_j} -\frac{1}{\eta_{j -1}} \right)\E[U_{j}] - \frac{1}{\eta_k}\E[U_{k+1}] + \frac{1}{\eta_{k - N -1}}\E[U_{k - N}] \notag \\
 &+  L \sigma^2\sum_{j = k-N}^k \eta_j +  \sum_{j = k -N}^k  \left( C_\nabla C_1 \gamma^{H(j)-1}  C_\nabla C_2 T_C(j)^{-b}\right). \label{eq:theorem1_proof_decompose_sum}
\end{align}
By definition of $U_k$, $\E[U_{k+1}] \geq 0$. Therefore we can omit it from the right hand side of \eqref{eq:theorem1_proof_decompose_sum}. Further, we know that $J(\theta^*) \leq U_R/(1-\gamma)$ as a consequence from Assumption \ref{assum:regularity}\ref{as:bounded_reward} [see \eqref{equ:J_bound}]. Hence we have $U_k \leq 2U_R/(1-\gamma) =: C_3$ for all $k$. Substituting this fact into the preceding expression yields
\begin{equation}
\begin{split}
\sum_{j = k - N}^k \E[\|\nabla J(\theta_j)\|^2] \leq  & \sum_{j = k-N}^k \left( \frac{1}{\eta_j} -\frac{1}{\eta_{j -1}} \right)C_3 + \frac{1}{\eta_{k - N -1}}C_3  +  L \sigma^2\sum_{j = k-N}^k \eta_j \\
& +  \sum_{j = k -N}^k  \left( C_\nabla C_1 \gamma^{H(j)-1}  C_\nabla C_2 T_C(j)^{-b}\right).
\end{split}
\end{equation}
By unraveling the telescoping sum, the first two terms are equal to $C_3/\eta_k$
\begin{equation}
\sum_{j = k - N}^k \E[\|\nabla J(\theta_j)\|^2] \leq \frac{C_3}{\eta_k}  +  L \sigma^2\sum_{j = k-N}^k \eta_j + \sum_{j = k -N}^k  \left( C_\nabla C_1 \gamma^{H(j)-1}  C_\nabla C_2 T_C(j)^{-b}\right).
\end{equation}
Substitute $\eta_k = k^{-a}$ for the step size 
\begin{equation}\label{eq:thm1_proof_pre_cases}
\sum_{j = k - N}^k \E[\|\nabla J(\theta_j)\|^2] \leq  C_4k^a  +  L \sigma^2\sum_{j = k-N}^k j^{-a} + \sum_{j = k -N}^k  \left( C_\nabla C_1 \gamma^{H(j)-1}  C_\nabla C_2 T_C(j)^{-b}\right).
\end{equation}
We break the remainder of the proof into two cases due to the fact that the right-hand side of the preceding expression simplifies when $b=1$, and is more intricate when $0<b<1$. We focus on the later case first.

\vspace{2mm}\noindent {\bf Case (i): $b \in (0,1)$ } Consider the case where $b\in (0,1)$. Set $T_C(k) = k$ and $H(k) = k$. Substitute the integration rule, namely that $\sum_{j = k - N}^k j^{-a} \leq k^{1-a} -(k - N - 1)^{1-a}$, into \eqref{eq:thm1_proof_pre_cases} to obtain:
\begin{equation}\label{eq:eq:thm1_proof_case_1}
\begin{split}
\sum_{j = k - N}^k \E[\|\nabla J(\theta_j)\|^2] \leq & C_4k^a  + C_\nabla C_1 \gamma^{-1}\sum_{j = k-N}^k \gamma^{j} + \frac{L \sigma^2}{1- a} \left( k^{1 - a} - (k - N - 1)^{1 - a}\right) \\
& + \frac{CL_1}{1-b}\left( k^{1-b} -(k - N - 1)^{1-b}\right).
\end{split}
\end{equation}
Divide both sides by $k$ and set $N = k - 1$
\begin{equation}\label{eq:eq:thm1_proof_case_1_setN}
\begin{split}
\frac{1}{k}\sum_{j = 1}^k \E[\|\nabla J(\theta_j)\|^2] &\leq  C_4k^{a-1} + C_\nabla C_1 \gamma^{-1}k^{-1} \sum_{j = 1}^k \gamma^{j} + \frac{L \sigma^2}{1- a}   k^{- a}  + \frac{CL_1}{1-b}k^{-b}\\
&\leq C_4k^{a-1} + \frac{C_\nabla C_1}{ \gamma (1-\gamma)}k^{-1} + \frac{L \sigma^2}{1- a}   k^{- a}  + \frac{CL_1}{1-b}k^{-b}\\
\end{split}
\end{equation}
Suppose $k = K_\epsilon$ so that we may write
\begin{equation}\label{eq:thm1_proof_case_1_k_eps}
\frac{1}{K_\epsilon}\sum_{j = 1}^{K_\epsilon} \E[\|\nabla J(\theta_j)\|^2] \leq \mathcal{O}\left(K_\epsilon^{a-1} + K_\epsilon^{-1}+K_\epsilon^{-a}  + K_\epsilon^{-b}\right).
\end{equation}
By definition of $K_\epsilon$ [c.f. \eqref{equ:Keps_def}], we have that $\E[\|\nabla J(\theta_j)\|^2] > \epsilon$ for all $j = 1,\dots,K_\epsilon$, so
\begin{equation}
\epsilon \leq \frac{1}{K_\epsilon} \sum_{j = 1}^{K_\epsilon} \E[\| \nabla J(\theta_j)\|^2] \leq \mathcal{O}\left(K_\epsilon^{a-1}+K_\epsilon^{-1}+K_\epsilon^{-a}  + K_\epsilon^{-b}\right).
\end{equation}
Defining $\ell = \min\{a, 1-a, b\}$, the preceding expression then implies
\begin{equation}
\epsilon \leq \mathcal{O}(K_\epsilon^{-\ell}),
\end{equation}
which by inverting the expression, yields the sample complexity
\begin{equation}
K_\epsilon \leq \mathcal{O}(\epsilon^{-1/\ell}).
\end{equation}

\vspace{2mm}\noindent {\bf Case (ii): $b =1$ } Now consider the case where $b = 1$. Set $T_C(k) = k+1$ and $H(k) = k$. Again, using the integration rule, and that $\sum_{j = k - N}^k (j+1)^{-1} \leq \log(k+1) - \log(k - N )$, we substitute into \eqref{eq:thm1_proof_pre_cases} which yields
\begin{equation}\label{eq:thm1_proof_case_2}
\begin{split}
\sum_{j = k - N}^k \E[\|\nabla J(\theta_j)\|^2] \leq  & C_4k^a+C_\nabla C_1 \sum_{j = k - N}^{k}\gamma^{j}  +  \frac{L \sigma^2}{1- a} \left( k^{1 - a} - (k - N - 1)^{1 - a}\right) \\
& + CL_1\left(\log(k+1) -\log(k - N )\right).
\end{split}
\end{equation}
Divide both sides by $k$ and fix $N = k - 1$
\begin{equation}\label{eq:thm1_proof_case_2_fix_N}
\frac{1}{k}\sum_{j = 1}^k \E[\|\nabla J(\theta_j)\|^2] \leq  C_4k^{a-1} + C_\nabla C_1 \gamma^{-1}k^{-1} \sum_{j = 1}^k \gamma^{j}  + \frac{L \sigma^2}{1- a}   k^{- a}  + CL_1 \frac{\log(k+1)}{k}.
\end{equation}
Let $k = K_\epsilon$ in the preceding expression, which then becomes
\begin{equation}\label{eq:thm1_proof_case_2_k_eps}
\frac{1}{K_\epsilon}\sum_{j = 1}^{K_\epsilon} \E[\|\nabla J(\theta_j)\|^2] \leq \mathcal{O}\left(K_\epsilon^{a-1}+K_\epsilon^{-1}+K_\epsilon^{-a}  + \frac{\log( K_\epsilon+1)}{K_\epsilon}\right).
\end{equation}
Again, by definition of $K_\epsilon$ [c.f. \eqref{equ:Keps_def}], we have that $\E[\|\nabla J(\theta_j)\|^2] > \epsilon$ for all $j = 1,\dots,K_\epsilon$, so
\begin{equation}\label{eq:thm1_proof_case_1_eps}
\epsilon \leq \frac{1}{K_\epsilon} \sum_{j = 1}^{K_\epsilon} \E[\| \nabla J(\theta_j)\|^2] \leq \mathcal{O}\left(K_\epsilon^{a-1}+K_\epsilon^{-1}+K_\epsilon^{-a}  + \frac{\log( K_\epsilon+1)}{K_\epsilon}\right).
\end{equation}
Optimizing over $a$, we have
\begin{equation}
\epsilon \leq  \mathcal{O}\left( K_\epsilon^{-\frac{1}{2}}\right) \; \text{ for } b > \frac{1}{2}
\end{equation}
%
%
On the other hand, 
\begin{equation}
\epsilon \leq  \mathcal{O}\left( K_\epsilon^{-b}\right). \text{ for } b \leq 1/2
\end{equation}
%
%
Fix $\ell = \min\{1/2,b\}$, then 
\begin{equation}
\epsilon \leq \mathcal{O}(K_\epsilon^{-\ell}),
\end{equation}
which implies
\begin{equation}
K_\epsilon \leq \mathcal{O}(\epsilon^{-1/\ell}).
\end{equation}
This concludes the proof.

\end{document}